\documentclass[sigconf]{acmart}
\usepackage{amsmath,amssymb,amsfonts,dsfont,multirow,multicol,epsfig,url,array,makecell,balance,color,epstopdf}
\usepackage[ruled, vlined, linesnumbered]{algorithm2e}
\usepackage{hhline}
\usepackage{array}
\usepackage{enumerate}
\usepackage{enumitem}
\usepackage{subfigure}
\usepackage{booktabs}
\usepackage{xcolor,colortbl}
\setlist[itemize]{leftmargin=*}

\def\header{\vspace{2mm} \noindent}

\newcommand{\pushright}[1]{\ifmeasuring@#1\else\omit\hfill$\displaystyle#1$\fi\ignorespaces}
\newcommand{\pushleft}[1]{\ifmeasuring@#1\else\omit$\displaystyle#1$\hfill\fi\ignorespaces}

\def\strap{STRAP\xspace}
\def\e{\varepsilon}

\def\s{\hat{s}}

\def\s{\mathbf{s}}
\def\t{\mathbf{t}}

\def\inN{\mathcal{I}}
\def\outN{\mathcal{O}}
\def\din{d_{in}}
\def\dout{d_{out}}

\def\a{\alpha}

\def\P{P}
\def\sppr{\mathrm{SPPR}}

\def\ppr{\mathrm{PPR}}

\def\pib{\pi}

\def\rb{r}

\def\brmax{r_{max}}

%
%
%
%
%
%
%
%
%
%
%
%
%


\begin{document}
\copyrightyear{2019}
\acmYear{2019}
\setcopyright{acmcopyright}
\acmConference[KDD '19]{The 25th ACM SIGKDD Conference on Knowledge Discovery and Data Mining}{August 4--8, 2019}{Anchorage, AK, USA} \acmBooktitle{The 25th ACM SIGKDD Conference on Knowledge Discovery and Data Mining (KDD '19), August 4--8, 2019, Anchorage, AK, USA}
\acmPrice{15.00} \acmDOI{10.1145/3292500.3330860} \acmISBN{978-1-4503-6201-6/19/08}

\keywords{Graph Embedding; Network Representation Learning; Personalized PageRank}

\fancyhead{}

\title{Scalable Graph Embeddings via Sparse Transpose Proximities}
\subtitle{[Technical Report]}
\author{Yuan Yin}
\email{yinyuan@ruc.edu.cn}
  \affiliation{%
  \institution{Beijing Key Lab of Big
    Data Management and Analysis Method,  MOE Key Lab DEKE, School of Information}
   \city{Renmin University
    of China}
}

\author{Zhewei Wei}
\authornote{Zhewei Wei is the corresponding author.}
\email{zhewei@ruc.edu.cn}
  \affiliation{%
  \institution{Beijing Key Lab of Big
    Data Management and Analysis Method,  MOE Key Lab DEKE, School of Information}
   \city{Renmin University
    of China}
}

\begin{abstract}

 Graph embedding learns low-dimensional representations for nodes in a graph and effectively preserves the graph structure. Recently, a significant amount of progress has been made toward this emerging research area. However, there are several fundamental problems that remain open. First, existing methods fail to preserve the out-degree distributions on directed graphs. Second, many existing methods employ random walk based proximities and thus suffer from conflicting optimization goals on undirected graphs. Finally, existing factorization methods are unable to achieve scalability and non-linearity simultaneously.
  
This paper presents an in-depth study on graph embedding techniques on both directed and undirected graphs. We analyze the fundamental reasons that lead to the distortion of out-degree distributions and to the conflicting optimization goals. We propose {\em transpose proximity}, a unified approach that  solves both problems. Based on the concept of transpose proximity, we design \strap, a factorization based graph embedding algorithm that achieves scalability and non-linearity simultaneously. \strap makes use of the {\em backward push} algorithm to efficiently compute the sparse  {\em Personalized PageRank (PPR)} as its transpose proximities. By imposing the sparsity constraint, we are able to apply non-linear operations to the proximity matrix and perform efficient matrix factorization to derive the embedding vectors. Finally, we present an extensive experimental study that evaluates the effectiveness of various graph embedding algorithms, and we show that \strap outperforms the state-of-the-art methods in terms of effectiveness and scalability. 
\end{abstract}


\maketitle

\vspace{0mm}
\section{Introduction} \label{sec:intro}
Graphs are a fundamental tool for understanding and modeling complex physical, social, informational, and biological systems. In recent years, graph embedding has drawn increasing attention from the academic fields due to its applications in various machine learning tasks.  The central idea of graph embedding is to learn a low-dimensional latent representation for nodes in the graph, such that the inherent properties and structures of the graph are preserved by the embedding vectors. These vectors can then be feed into well-studied machine learning methods in the vector space for common tasks on graphs such as classification, clustering, link prediction, and visualization.

In the past year, many methods have been proposed for learning node representations, and we summarize a few of recent ones in Table~\ref{tbl:proximities}. In general,  there are broadly two categories of approaches: methods which use random walks to learn the embedding vectors, and methods which use matrix factorization to directly derive the embedding vectors. Despite of their diversity, most of the existing methods adopt the following framework: 1) Determine a proximity measure $P(u,v)$; 2) Train embedding vector $\s_u$ for each node $u\in V$, such that $\s_u\cdot \s_v \sim P(u,v)$. For random walk methods,  $\s_u$'s are trained by skip-gram model~\cite{mikolov2013distributed} with negative sampling or hierarchical softmax; For factorization methods, $\s_u$'s  are directly derived from singular value decomposition (SVD) or eigen-decomposition.
Recently, \cite{tsitsulin2018verse}
and ~\cite{zhou2017scalable}  propose  that in order to  capture the asymmetry of directed graphs, we should train two vectors $\s_u$ and $\t_u$ as content/contextrepresentations, and thus the goal becomes to train $\s$ and $\t$\ such that $\s_u\cdot \t_v \sim P(u,v)$ for any $u,v\in V$. 

\begin{table}[!t]
\centering
\begin{small}
\renewcommand{\arraystretch}{1.2}
\vspace{0mm}
\caption{Proximities used by existing methods.} \label{tbl:proximities}
\vspace{-2mm}
\begin{tabular}{|p{0.6in}|p{1.8in}|p{0.65in}|}
    \hline
   {\bf Method}&  {\bf Proximity}&  {\bf Category}\\

    \hline
  DeepWalk~\cite{perozzi2014deepwalk}  &$\s_u\cdot \s_v \sim$ probability that a truncated random walk from $u$ visits $v$
                                    & Random Walk\\
  Node2Vec~\cite{grover2016node2vec} &$\s_u\cdot \s_v \sim$ probability that a truncated  2nd order random walk from $u$ visits $v$ & Random Walk\\
  LINE~\cite{tang2015line}  &$\s_u\cdot  \s_v \sim $ Adjacency relation between $u$ and $v$& Random Walk\\
  APP~\cite{zhou2017scalable} &$\s_u\cdot  \t_v \sim \ppr(u,v)$& Random Walk\\
  VERSE~\cite{tsitsulin2018verse} &$\s_u\cdot  \t_v \sim$ $\ppr(u,v)$, $ \mathrm{SimRank}(u,v)$& Random Walk\\
  HOPE~\cite{ou2016asymmetric} &$\s_u\cdot  \t_v \sim $ $\ppr(u,v), \mathrm{Katz}(u,v)$& Factorization\\
  AROPE~\cite{zhang2018arbitrary} &$\s_u\cdot  \t_v \sim $ Higher order proximity of form $\sum_{i=1}^qw_iA^i$& Factorization\\
    \hline
\end{tabular}
\vspace{-7mm}
\end{small}
\end{table}

\begin{figure*}[!t]
\begin{small}
 \centering
   \vspace{-4mm}
   \includegraphics[height=28mm]{./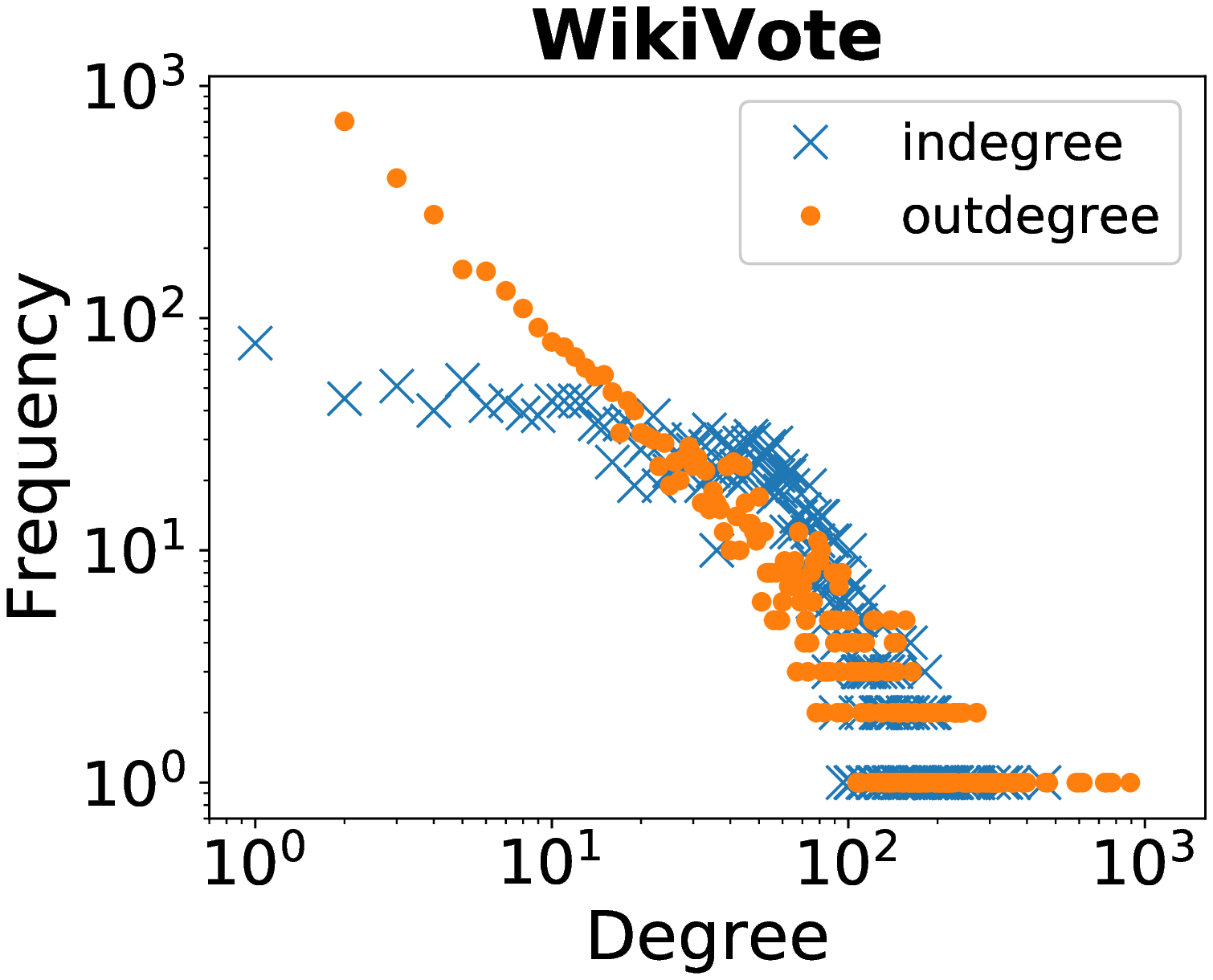}
 \includegraphics[height=28 mm]{./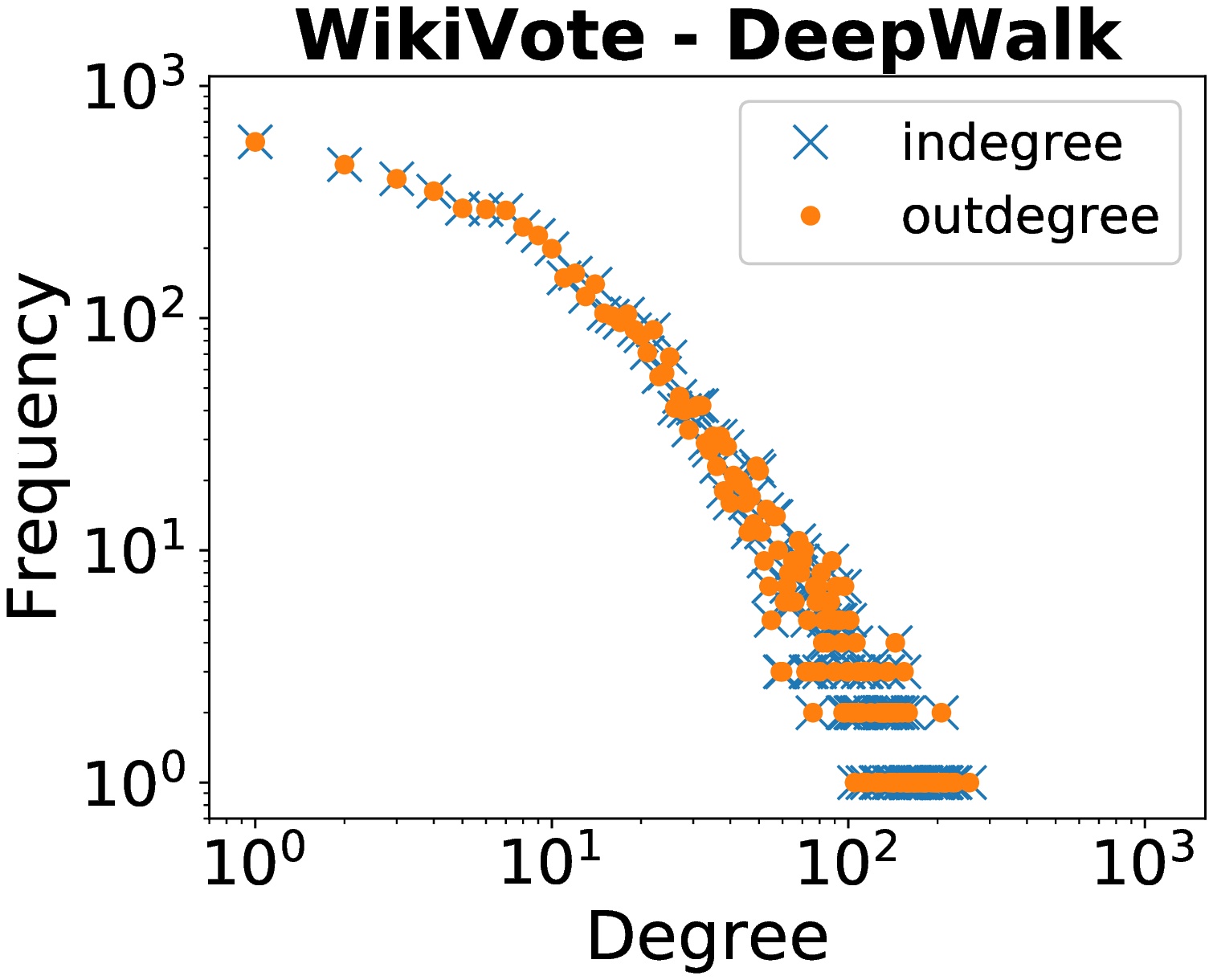}
 \includegraphics[height=28 mm]{./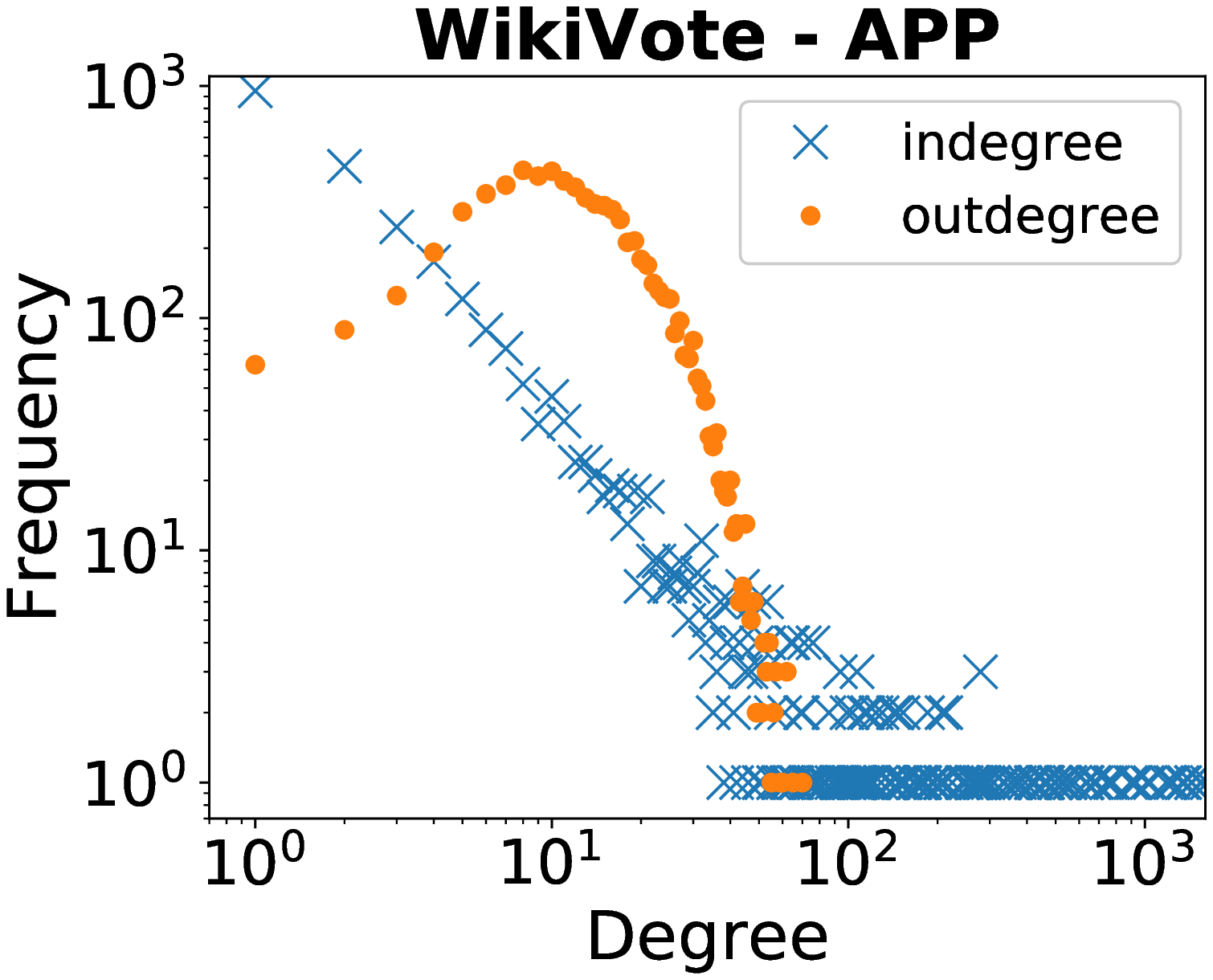}
 \includegraphics[height=28 mm]{./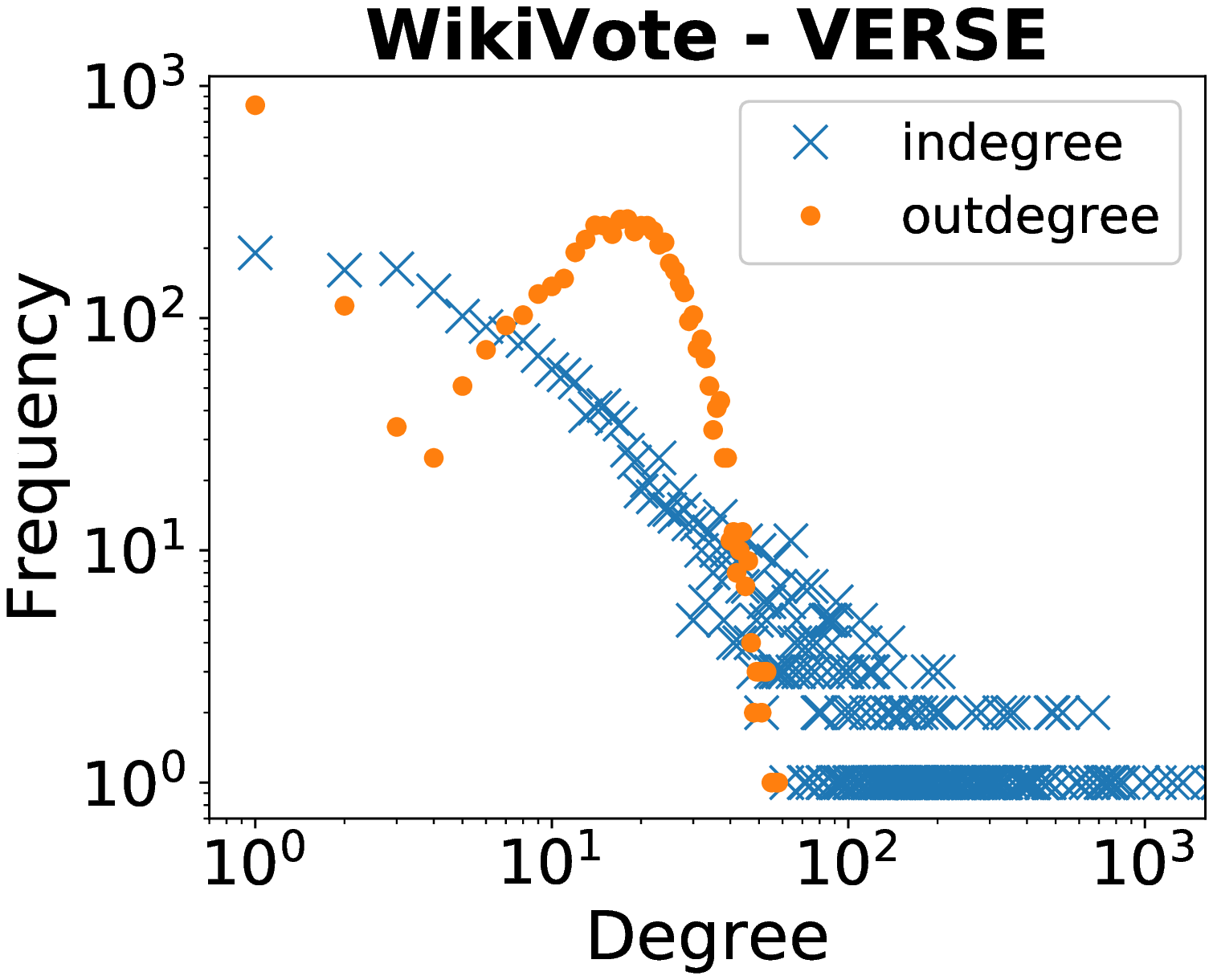}
  \includegraphics[height=28 mm]{./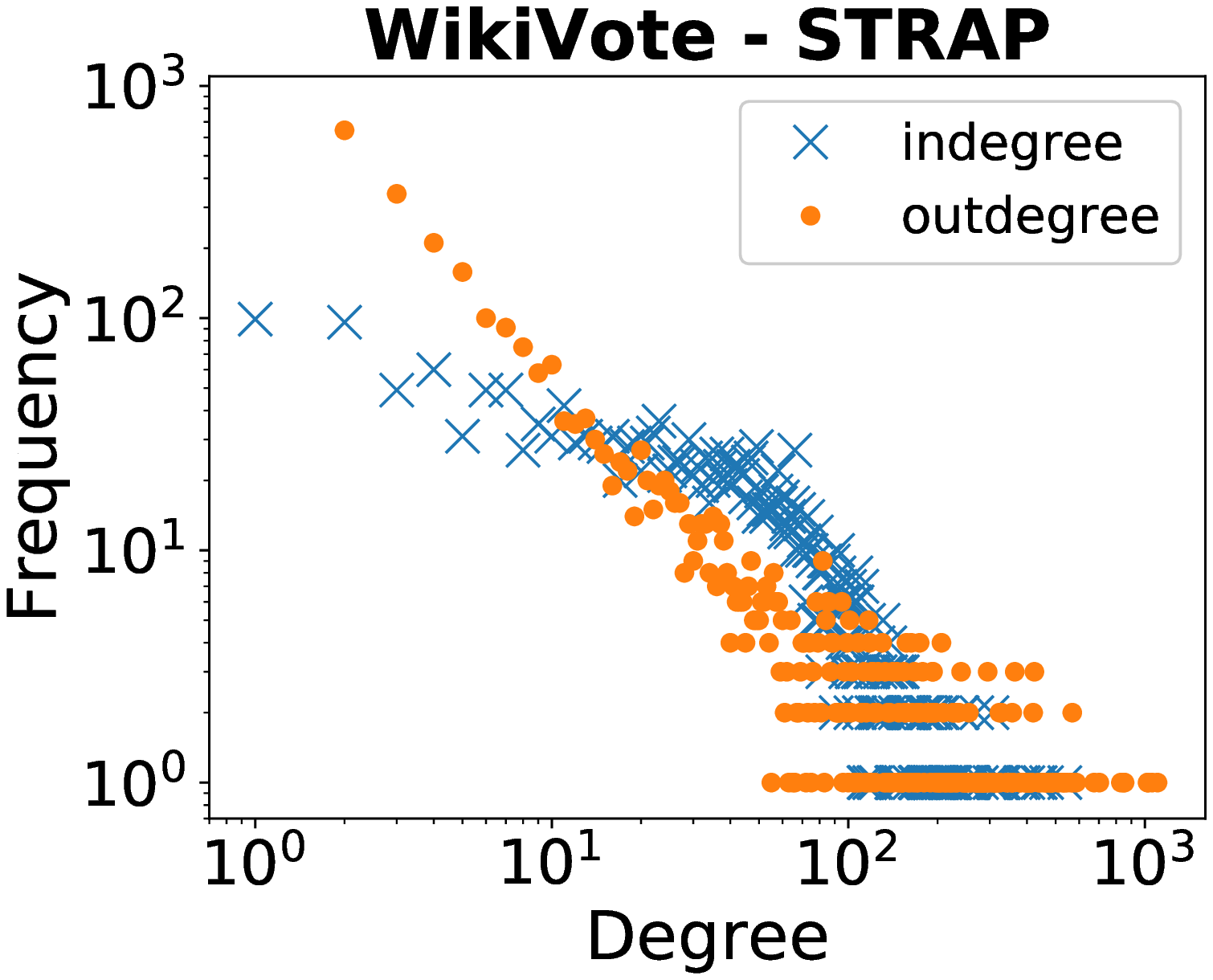}
  \vspace{-3mm}
 \caption{Degree distributions of WikiVote.} \label{fig:degree_wikivote}
\vspace{-3mm}
\end{small}
\end{figure*}

\vspace{-1mm}
\subsection{Motivations and Objectives}
Although a significant amount of progress has been made toward the understanding of graph embedding,  we believe that there are still several fundamental problems that remain unsolved. 
The goal of this paper is to analyze the mechanisms that cause these problems  and to design principles and techniques that solve them. In particular, our objective is to design a graph embedding algorithm with the following desired properties.

\header{\bf Objective 1: preserve both in- and out-degree distributions on directed graphs.} Consider a simple task of reconstructing the directed graph WikiVote with $n=7,115$ nodes and $m=103,689$ edges. We train embedding vectors for each node, rank pairs of nodes according to the inner products of their vectors, remove self-loop, and take the top-$m$  pairs of nodes to reconstruct the graph. Figure~\ref{fig:degree_wikivote} shows the degree distributions of the original graph WikiVote and the reconstructed graphs by several state-of-the-art graph embedding algorithms: DeepWalk~\cite{perozzi2014deepwalk}, APP~\cite{zhou2017scalable}, VERSE~\cite{tsitsulin2018verse}. We exclude the results of some other methods such as HOPE~\cite{ou2016asymmetric} and Node2Vec~\cite{grover2016node2vec},  as the results are similar to those of DeepWalk or VERSE.  We first observe that DeepWalk generates identical in/out-degree distributions. This is because DeepWalk (or Node2Vec) trains a single embedding vector $\s_u$ for each node $u$, and uses the same inner product $\s_u \cdot \s_v = \s_v \cdot \s_u$ to predict for the edge from  $u$ to $v$ and the edge from $v$ to $u$.
Therefore, DeepWalk (and Node2Vec) is only able to preserve structural information for undirected graphs.

The second observation is that VERSE and APP, the two recent embedding algorithms that are designed for directed graphs,  fail to preserve the out-degree distribution of the original graphs. In particular, the reconstructed out-degree distributions do not follow  power-law distribution: there are no nodes with large out-degrees, and most out-degrees concentrate on 14, the average out-degree of the original graphs. As it turns out, there is a fundamental reason for this phenomenon. Recall that an embedding algorithm determines a proximity measure $P(u,v)$, and tries to train $\s_u\cdot \t_v \sim P(u,v)$. For random walk based proximities such as Personalized PageRank or hitting probability,  the proximities values of node $u$ to any nodes in the graph is normalized, i.e., $\sum_{v\in V}P(u,v) = 1$. Therefore, given a source node $u$,  the number of pairs $(u,v)$ with large proximities $P(u,v)$ (and hence large inner products $\s_u\cdot \t_v$) is actually limited to a very small range. Consequently, these methods are inherently unable to reconstruct nodes with large out-degrees, and the out-degrees of the reconstructed graph will concentrate  on the average out-degree of the original graph. 

We note that the lack of ability to preserve degree distributions will hurt both the effectiveness of the embedding vectors. In particular, these methods are inherently unable to make predictions  for nodes with many or very few out-neighbors. Therefore, our first objective is to study how to modify the proximity measure $P(u,v)$ to preserve both in- and out-degree distributions. 





\header{\bf Objective 2: avoid conflicting optimization goals for undirected graphs.} Another more subtle deficiency suffered by existing techniques is the conflicting optimization goals lead by the usage of asymmetric proximities.  More precisely, recall that existing methods such as DeepWalk, Node2Vec and
VERSE  train a single embedding vector  $\s_u$ for each node $u$ on undirected graphs,  such that $\s_u \cdot \s_v \sim P(u,v)$ for some proximity measure $P(u,v)$. Consequently, the algorithms  will train $\s_v \cdot \s_u$ to approximate the proximity $P(v,u)$. We note that the inner product $\s_u \cdot \s_v = \s_v \cdot \s_u$ is commutative, but the proximity $P(u,v)$ generally does not equal to $P(v, u)$, even on undirected graphs.
For example, the probability that a random walk from $u$ visit $v$ in $t$ steps does not equal to the probability that a random walk from $v$ visit $u$ in $t$ steps; the Personalized PageRank of $v$ with respect to $u$ does not equal to the Personalized PageRank of $u$ with respect to $v$. As a consequence, these methods try to train $\s_u \cdot \s_v$ to approximate two conflicting values, which will hurt the quality of the embeddings vector.

On the other hand, HOPE and APP tries to solve this problem by training asymmetric content/context embedding vectors $\s_u$ and $\t_u$ for each node $u$ on undirected graphs, such that $\s_u \cdot \t_v \sim P(u,v)$ and $\s_v \cdot \t_u \sim P(v,u)$. However, this approach introduces another problem: since there may be a substantial difference between $\s_u \cdot \t_v$ and $\s_v \cdot \t_u$, we are unable to determine which to use to predict for edge $(u,v)$ in the task of graph reconstruction or link prediction. Therefore, it is desirable to use symmetric proximities on undirected graphs. At first glance, this requirement  rules out all random walk based proximities. However, as we shall see, we can achieve symmetry by a simple modification.


\header{\bf Objective 3: design factorization method that achieves scalability and non-linearity  simultaneously. }
The general goal of embedding algorithms is to  optimizing both 
inductive (e.g. link prediction) and transductive (e.g. graph reconstruction) effectiveness, and to achieve high scalability on large graphs. Matrix factorization methods usually achieve good transductive effectiveness as they are designed to minimize the reconstruction error of the proximity matrix. However, they suffer from 
scalability problem, since it takes $\Theta(n^2)$ time to compute the proximity matrix. Recently, HOPE and AROPE~\cite{zhang2018arbitrary}  avoid  the $\Theta(n^2)$ computation time by factorizing a sparse matrix that closely related to the proximity matrix. However,  they do not explicitly compute the proximity matrix, and thus do not allow any non-linear operation (such as taking logarithm or softmax) on the proximity matrix. This approach limits their inductive strength due to  the linear nature. In fact, it has been shown in~\cite{zhou2017scalable}
and~\cite{qiu2018network} that skip-gram based algorithms implicitly
factorize the logarithm of certain proximity matrix,  where taking entry-wise
logarithm simulates the effect of the sigmoid function and improves the induction strength of the model. As a result,  it is desirable to design a factorization method that achieves high scalability and allows non-linear operations on the proximity matrix.

\vspace{-2mm}
\subsection{Our Contributions}
To remedy the deficiencies of existing techniques, this paper presents an in-depth study of graph embedding techniques on both directed and undirected graphs. 
First, given a normalized proximity measure $P(u,v)$, we propose that instead of training $\s_u \cdot \t_v \sim P(u,v)$, we should train $\s_u \cdot \t_v $ to approximate the  {\em transpose proximity} $P(u,v) + P^T(v,u) $, where $P^T(v,u) $ is the proximity of $u$ with respect to $v$ in the transpose graph $G^T$. Here $G^T$ is obtained by reverting the edge direction of the original graph $G$. We show  that by this simple modification,  we solve the distortion of out-degree distributions and  the conflicting optimization goals simultaneously.

Based on the concept of transpose proximity,  we propose {\it \strap} (graph embedding via Sparse TRAnspose Proximities), an embedding algorithm that provides both high predictive strength and scalability. See Figure~\ref{fig:degree_wikivote} for the reconstructed degree distributions of WikiVote by \strap.
We use {\em Personalized PageRank (PPR)}  as the normalized proximity measure $P(u,v)$ to demonstrate the superiority of transpose proximity. 
To avoid the $\Theta(n^2)$ barrier of computing pair-wise PPR, we employ the {\em backward push} algorithm~\cite{lofgren2015personalized} that computes approximate pair-wise PPR values with additive error $\e$ in $O(m/\e)$ time.  Unlike HOPE or AROPE, we explicitly derive the proximity matrix $P$, a sparse matrix that consists of at most $O(n/\e)$ non-zero entries. The sparsity enables us to impose non-linear functions such as entry-wise logarithm to improve the predictive strength, as well as to use sparse SVD algorithm to efficiently decompose $P$ into the embedding vectors. We experimentally evaluate \strap on a variety of benchmark datasets, and our results demonstrate that \strap outperforms state-of-the-art methods for both transductive and inductive tasks.




\vspace{-2mm}
\section{Related work} \label{sec:related}


In general,  there are broadly two categories of approaches: methods
which use random walks to learn the embeddings, and methods
which use matrix factorization to directly derive the embeddings. We briefly review some of the relevant works in each
category and refer readers to~\cite{cui2018survey,goyal2018graph,zhang2018network} for
comprehensive surveys.

\header{\bf Factorization methods.} A natural idea for preserving
the high-order proximities is to perform explicit matrix factorization
on the proximity matrices. Early efforts include
LLE~\cite{roweis2000nonlinear}, Laplacian
Eigenmaps~\cite{belkin2002laplacian} and
Graph Factorization~\cite{ahmed2013distributed}.
GRAREP~\cite{cao2015grarep} performs SVD on the $k$-th order
transition matrix, and GEM-D~\cite{chen2017fast} gives a unified approach to compute
and factorize the proximity matrix for various proximity measures. \cite{qiu2018network} shows that existing random walk
based methods are equivalent to factorizing high order proximity
matrices.  However, computing the proximity
matrix for the above methods 
still takes $\Theta(n^2)$ time, and hence are inherently not
scalable. HOPE~\cite{ou2016asymmetric} avoid the $\Theta(n^2)$ time  by performing 
a special version of SVD on proximity matrix of form $M_g^{-1}M_\ell$,
where both $M_g$ and $M_\ell$ are
sparse.  Recently, \cite{zhang2018arbitrary} proposes AROPE, a general
framework for preserving arbitrary-order proximities that includes HOPE as its
special case. However, HOPE and AROPE do not compute the explicit
proximity matrix, and thus are unable to support any non-linear operation on the proximity matrix, which
limits their inductive strength due to  the linear
nature.

\header{\bf Random walk methods.} Random walks have been used to
approximate many different proximities such as Personalized PageRank~\cite{page1999pagerank},
Heat Kernel PageRank~\cite{kloster2014heat} and
SimRank~\cite{JW02}. In the line of graph embedding research, 
DeepWalk~\cite{perozzi2014deepwalk}  first proposed to train embedding vectors by feeding
truncated random walks to the SkipGram
model~\cite{mikolov2013distributed}. The model is optimized by
Stochastic Gradient Descent (SGD).  LINE~\cite{tang2015line} focuses
on one-hop neighbor proximity, which essentially equals to random
walks with step at most 1. Node2Vec~\cite{grover2016node2vec} proposes to replace the truncated
random walks with a higher order random walks that exploit both DFS
and BFS nature of the graph. Recently, Verse~\cite{tsitsulin2018verse}
and APP~\cite{zhou2017scalable} propose to train embedding vectors
using Personalized PageRank, where the positive samples can be
efficiently obtained by simulating $\alpha$-discounted random walks. Random
walk methods are scalable than some of the factorization methods and
generally achieve higher inductive effectiveness. However, they only
allow normalized proximity measure, which, as we shall see, leads to
the distortion of out-degree distributions on directed
graphs.

\header{\bf Other related work.}
The growing research on deep learning has led to a deluge of deep
neural networks based methods applied to graphs
\cite{wang2016structural,cao2016deep,niepert2016learning,bhuiyan2018representing}. Recently,
Graph Convolutional Network (GCN)~\cite{kipf2016semi} and its variants
have drawn increasing research attention. 
There are also various graph embeddings designed for specific graphs, such as
signed graphs~\cite{kim2018side,yuan2017sne}, dynamic
graphs~\cite{zhou2018dynamic,ma2018depthlgp} and heterogeneous
networks~\cite{chang2015heterogeneous,dong2017metapath2vec}. In this paper, we focus on the most fundamental
case  where only the static network is available.  



\vspace{0mm}
\section{strap  algorithm}
In this section, we present \strap, a scalable graph embedding
algorithm that achieves all three
objectives. Table~\ref{tbl:notations} summaries the frequently used
notations used in this paper. We first show that a
unified approach, {\em transpose proximity,} achieves {\bf Objective
  1} and {\bf Objective 2} simultaneously.
\begin{table}[!t]
\centering
\begin{small}
\renewcommand{\arraystretch}{1.2}
\caption{Frequently used notations.} \label{tbl:notations}
\vspace{-3mm}
\begin{tabular}{|p{0.8in}|p{2.4in}|}
    \hline
    {\bf Notation} &  {\bf Description}\\
    \hline
    $G$=$(V,E)$   & The input graph $G$ with node set $V$ and edge set $E$\\
    \hline
    $n,m$   & The number of nodes and edges in $G$, respectively\\
    \hline
    $\outN(v)$, $\inN(v)$  & The set of  out- and  in-neighbors of node $v$ \\
     \hline
    $d_{out}(v)$, $d_{in}(v)$  & The out-degree and in-degree of node
                                 $v$ \\
   \hline
    $\s_u, \t_u$   & The content/context embedding vectors of
                     $u$\\
     \hline
    $\ppr(u, v)$   & The Personalized PageRank  of $v$ with respect to $u$\\
    \hline
    $\alpha$   & The decay factor\\
    \hline
    $\rb(u,v)$, $\pib(u,v)$ & The reserve and residue of $v$ from $u$
                              in backward push \\
    \hline

\end{tabular}
\vspace{-5mm}
\end{small}
\end{table}

\vspace{-2mm}
\subsection{Transpose Proximity}
Suppose the goal of a graph embedding algorithm is to train
content/context embedding vectors $\s_u$ and $\t_u$ for each node $u \in
V $, such that $\s_u\cdot \t_v\sim P(u, v) $ for a predetermined
proximity measure $P(u,v)$. We assume the
proximities of any node with respect to a given node $u$ is
normalized, i.e., $\sum_{v\in V} P(u, v) =1$. This assumption holds
for any random walk based proximities (e.g. Personalized PageRank,
hitting probability etc), and thus is well-recognized by various graph
embedding algorithms.  Let
$G^T$ denote the transpose graph of $G$, that is, there is an edge
from node $v$ to node $u$ in $G^T$ if and only if there is an edge
from node $u$ to node $v$.

The key insight of transpose proximity is that instead of optimizing $\s_u \cdot \t_v \sim P(u,v)$,
we should optimize $\s_u \cdot \t_v \sim P(u,v) + P^T(v, u)$, where
$P^T(v, u)$ is the proximity of $u$ with respect to $v$ in the
transpose graph $G^T$. We will show
that 1) the transpose proximity
preserves both in- and out-degree distributions for directed graphs
and 2) the transpose proximity avoid conflicting optimization goals on
undirected graphs.

\header{\bf In-degree distribution and proximities.} To show that the transpose proximity
preserves both in- and out-degree distributions for directed graphs,
we first establish the connection between the in-degree distributions
and normalized proximities.
We observe from Figure~\ref{fig:degree_wikivote} that although existing methods do not preserve
the out-degree distribution,
they  generate power-law-shaped in-degree distributions that are
similar to the one of the original
graph. An intuitive explanation is that although the proximity sum
$\sum_{v\in V} P(u, v)$ from a source node $u$ is normalized to $1$, the
proximity sum 
$\sum_{u\in V} P(u, v)$ to a target  node $v$ is not
normalized. In fact, $\sum_{u\in V} P(u, v)$ reflects the number of
nodes that are
similar to $v$ and thus is a good approximation
to the indegree of $v$.  For example, VERSE, APP and HOPE use
Personalized PageRank of $v$ with respect to $u$
as the normalized proximity measure $P(u, v)$. For this particular
proximity measure, we have  $\sum_{u\in V}
P(u, v) = n\cdot \mathrm{PR}(v) $~\cite{page1999pagerank}, where $n$ is the number of nodes in the graph,
and $\mathrm{PR}(v)$ is the PageRank of $v$. It is shown
in~\cite{BahmaniCG10,lofgren2015personalized,wei2018topppr} that on
scale-free networks, 
PageRank and in-degrees follow the same power-law distribution, which
implies that $\sum_{u\in V}
P(u, v) = n\cdot \mathrm{PR} (v)  \sim \din(v)$.  Consequently, we claim the
reason that 
existing method preserves the in-degree distribution is that
they employ
normalized proximity $P(u,v)$ that satisfies $  \sum_{u\in V} P(u,v) \sim \din(v).$

\header{\bf In-/out-degree distributions and transpose proximities.} 
With the insight that $  \sum_{u\in V} P(u,v) \sim \din(v)$, 
we now show
that transpose proximity $P(u,v) + P^T(v, u)$ preserves both the in- and
out-degree distribution. Consider the summation of transpose
proximities to a target node $v$,  we have $  \sum_{u \in V}\left(P(u, v) + P^T(v,u) \right) = \sum_{u \in V}P(u,
  v) + \sum_{u \in V}P^T(v,u)= \sum_{u \in V}P(u,v) + 1 \sim \din(v).$
Note that here we use the fact that $ \sum_{u \in
  V}P(u,v) \sim \din(v)$ and ignore the plus one since it does not
change the relative order of the proximity summations.
On the other hand, let  $\din^T(u)$ denote the in-degree of $u$ in the transpose graph
$G^T$.  Consider the summation of  transpose proximities from a source
node $u$ and we have $\sum_{v \in V}\left(P(u, v) + P^T(v,u) \right) = \sum_{v \in V}P(u,
                                                 v) + \sum_{v\in V}P^T(v,u) 
  =1+ \sum_{v\in V}P^T(v,u)  \sim
    \din^T(u).$
Here we use the fact $ \sum_{u
  \in V}P^T(v,u) \sim \din^T(u)$ in the transpose graph
$G^T$. We observe that $ \din^T(u)$, the in-degree of $u$ in the transpose graph
$G^T$, equals to $\dout(u)$, the out-degree of $u$ in the original graph
$G$. It follows that the summation $\sum_{v\in V} \left(P(u, v) + P^T(v,u) \right) \sim \dout(u)$.
Therefore the summation  of transpose proximities to a target node $v$ approximates the in-degree of $v$, while
the summation of transpose proximities from a source node $u$
approximates the out-dgree of $u$. As 
a consequence, by employing the transpose proximity $P(u,v) + P^T(v,
u)$, we preserve both the in- and
out-degree distribution for directed graphs.

\header{\bf Transpose proximity on undirected graphs.} 
Another advantage of transpose proximity is that it
automatically avoids the
conflicting optimization goals on undirected graphs. In particular,
note that for undirected graphs, the transpose graph $G^T$ is
identical to the original graph $G$, and thus
$P^T(v, u) $ equals to $P(v, u)$. Therefore, the
transpose proximity becomes $P(u,v)
+ P(v,u) $, which is a symmetric similarity measure for any
proximity measure $P(u,v)$.  Consequently, we train $\s_u \cdot \s_v = \s_v \cdot \s_u$
to approximate the same proximity $P(u,v)
+ P(v,u) = P(v,u)
+ P(u,v)$, and thus avoiding the conflicting optimization goals
suffered by existing techniques. 

\vspace{-2mm}
\subsection{Sparse Personalized PageRank}
Although the concept of transpose proximity works for any normalized
proximity measure $P(u,v)$, in this paper we focus on $P(u,v) = \ppr(u,v)$, the  {\em Personalized
  PageRank (PPR)}~\cite{page1999pagerank} of node $v$ with respect to
node $u$. Given a source node $u$, a target node $v$ on directed graph $G =
(V,E)$, $\ppr(u,v)$
measures the importance of $v$ in the view of $u$. 
More precisely, we define an
{\em $\alpha$-discounted random walk} 
from $u$ to be a traversal of $G$ that starts from $u$ and, at each step,
either 1) terminates at the current node with $\alpha$ probability,
or 2) proceeds to a randomly selected out-neighbor of the current
node. For any node $v \in V$, 
$\ppr(u,v)$ of $v$ with respect to $u$ is the probability that an
$\alpha$-discounted random walk from $u$ terminates at $v$.
We choose PPR mainly because it has been widely used in graph embedding 
algorithms~\cite{ou2016asymmetric,tsitsulin2018verse,zhou2017scalable}.
Moreover, as we stated before, the summation of
$\sum_{u\in V} \ppr(u,v) = n\mathrm{PR}(v)$ equals  the PageRank of
$v$, and PageRank and in-degrees follow the same power-law
distribution. Therefore, by employing transpose proximity matrix $P$ with
$P_{uv}=\ppr(u,v) + \ppr^T(v,u)$, the resulting embeddings will  preserve both the in- and
out-degree distributions of the graphs.



However, directly computing PPR for any node pair $(u,v)$
takes at least $\Theta(n^2)$ time and memory usage.  To make thing worse, it takes $O(n^3)$ time to decompose a $n\times
n$ dense proximity matrix. Therefore, it is infeasible to compute exact
PPR for all node pairs on
large graphs. HOPE~\cite{ou2016asymmetric}  proposes to decompose the PPR matrix $\ppr =
M_g^{-1}M_\ell$ by performing a generalized SVD on sparse matrices
$M_g$ and $M_\ell$, where $M_g= I -(1-\alpha) D^{-1}A$, $M_\ell =
\alpha I$, $D$ is the diagonal degree matrix and $A$ is the
adjacency matrix. However, this approach does not explicitly compute
the PPR matrix and thus does not support decomposition of the transpose
proximity matrix $P$ where $P_{uv} = \ppr(u,v) + \ppr^T(v,u)$. Furthermore, it does not allow
non-linear operations before the
decomposition, which is crucial for achieving satisfying predictive strength~\cite{qiu2018network}.

To explicitly compute PPR values for all pairs of nodes in the graph
efficiently, we will use an approximate version of PPR
called {\em Sparse Personalized PageRank (SPPR)}. Given an error
parameter $\e \in (0,1)$  and two nodes $u$
and $v$ in the graph, $SPPR(u,v)$ is a real value that  satisfies:1) 
  $|SPPR(u,v) - PPR(u,v)| \le \e $, for any $u, v \in V$; 2) For a
fixed node $u$, there are at most $2/\e$ nodes $v$ with non-zero $SPPR(u,v)$.
Note that the first condition guarantees that the sparse
Personalized PageRank approximates the original Personalized PageRank
with  precision $\e$. This relaxation allows us to compute SPPR in
time linear to the edge number $m$. On the other hand, the second condition ensures that
the proximity matrix is  sparse, which is crucial for efficient
 matrix decomposition. Combining the idea of transpose proximity,
our final proximity measure is defined as
$P_{uv} = SPPR(u,v) + SPPR^T(v,u)$. 


\vspace{-2mm}
\subsection{Computing  SPPR with Backward Push}
\vspace{-3mm}
  \begin{algorithm}[h]
\begin{small}
\caption{Backward Push~\cite{lofgren2015personalized}} \label{alg:bp}
\BlankLine
\KwIn{Graph $G$, target node $v$, decay factor $\a$, threshold $\brmax$ }
\KwOut{Backward residue $\rb(x,v)$ and reserve $\pib(x,v)$ for all $x\in V$}
$\rb(v, v) \gets 1$ and $\rb(x,v) \gets 0$ for all $x \ne v$;
$\pib(x,v) \gets 0$ for all $x$\;
\While {$\exists x$ such that $\rb(x,v)> \brmax$}
{
    \For {each $y \in \inN(x)$}
    {
       $\rb(y, v) \gets \rb(y, v) + (1-\alpha) \cdot \frac{ \rb(x, v)}{d_{out}(y)}$
    }
    $\pib(x,v) \gets \pib(x,v) + \alpha  \cdot \rb(x, v)$\;
    $\rb(x, v) \gets 0$\;
  }
\end{small}
\end{algorithm}
\vspace{-4mm}
\header{\bf Backward Push.} We employ a local search algorithm called {\em Backward
  push}~\cite{lofgren2015personalized} to compute SPPR for any node pair $(u,
v)$ in $O(m / \e)$ time. Given a {\em destination} node $v$, the
backward push algorithm employs a
traversal from $v$ to compute an approximation of $v$'s PPR value $\ppr(u, v)$ with respect
to any other node $u$.  We sketch the algorithm in Algorithm~\ref{alg:bp} for
completeness.   
The algorithm starts by assigning a residue $\rb(x, v)$
and reserve $\pib(x, v) = 0$ to each node $x$, and setting $\rb(v, v)
= 1$ and $\rb(x, v) = 0$ for any $x \ne v$ (Lines 1-2). Subsequently,
it traverses from $v$, following the incoming edges of each node. For
any node $x$ that it visits, it checks if $x$'s residue $\rb(x, v)$ is
larger than a given threshold $\brmax$. If so, then it increases $x$'s
reserve by $\alpha \times \rb(x, v)$ and, for each in-neighbor $y$ of
$x$, increases the residue of $y$ by $(1-\alpha)\cdot \frac{\rb(x,
  v)}{d_{out}(y)}$ (Lines 4-5). After that, it reset $x$'s residue
$\rb(x, v)$ to $0$ (Line 6). The following Lemma is proven by Lofgren et al.\
\cite{lofgren2015personalized}:

\begin{lemma}[\cite{lofgren2015personalized}]
  \label{lem:backward}
The amortized time of backward push over all possible target nodes $v\in V$ is $O\left(\frac{m}{n\cdot
    \brmax}\right)$.  When the algorithm terminates, it provides a
reserve $\pib(u,v)$ for any node $u$, such that 
$$\ppr(u, v) - \brmax \le \pib(u, v)  \le  \ppr(u, v).$$
  \end{lemma}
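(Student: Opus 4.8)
The statement bundles two essentially independent claims—a correctness guarantee on the reserves $\pib(u,v)$ and an amortized running-time bound—so I would prove them separately, with a single structural invariant doing most of the work for the correctness half.

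The plan for the accuracy bound is to track the invariant
$$\ppr(s,v) \;=\; \pib(s,v) \;+\; \sum_{x\in V}\rb(x,v)\,\ppr(s,x)$$
for every fixed source $s$, throughout the execution. It holds at initialization because $\pib\equiv 0$, $\rb(v,v)=1$ and $\rb(x,v)=0$ for $x\ne v$, so the right-hand side collapses to $\ppr(s,v)$. To see that a single push on a node $x$ preserves it, I would first record the target-side recurrence for PPR,
$$\ppr(s,x)=\a\cdot\mathds{1}[s=x]+(1-\a)\sum_{y\in\inN(x)}\frac{\ppr(s,y)}{\dout(y)},$$
obtained by conditioning an $\a$-discounted walk on its final step. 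A push zeroes $\rb(x,v)$, thereby removing $\rb(x,v)\,\ppr(s,x)$ from the sum, adds $\a\,\rb(x,v)$ to $\pib(s,v)$ (which matters only when $s=x$), and adds $(1-\a)\frac{\rb(x,v)}{\dout(y)}\ppr(s,y)$ for each $y\in\inN(x)$; the recurrence makes these changes cancel term-by-term, so the invariant is preserved. Given the invariant, both inequalities are immediate: residues and PPR values are nonnegative, so $\pib(s,v)\le\ppr(s,v)$; and at termination every residue satisfies $\rb(x,v)\le\brmax$, whence $\ppr(s,v)-\pib(s,v)=\sum_x \rb(x,v)\ppr(s,x)\le \brmax\sum_x\ppr(s,x)=\brmax$, using $\sum_{x\in V}\ppr(s,x)=1$.

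For the running time, the key quantity is the number of pushes performed on a node $x$ for a fixed target $v$. Each push requires $\rb(x,v)>\brmax$ and increases $\pib(x,v)$ by $\a\cdot\rb(x,v)>\a\brmax$; since $\pib(x,v)$ only grows and never exceeds $\ppr(x,v)$, node $x$ is pushed at most $\ppr(x,v)/(\a\brmax)$ times. As each push on $x$ scans its in-neighbors at cost $O(\din(x))$, the total work for target $v$ is $O\!\big(\sum_x \din(x)\,\ppr(x,v)/(\a\brmax)\big)$. The crucial step—and the reason the bound is stated as an amortization over all targets—is to sum this over $v\in V$ and swap the order of summation: $\sum_v\sum_x \din(x)\ppr(x,v)=\sum_x\din(x)\sum_v\ppr(x,v)=\sum_x\din(x)=m$, again using that PPR from a fixed source sums to $1$. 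This gives total work $O(m/(\a\brmax))$ across all $n$ targets, i.e.\ amortized $O(m/(n\brmax))$ per target after absorbing the constant $\a$.

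I expect the running-time amortization to be the only genuinely delicate point. A per-target bound would leave the awkward weight $\sum_x\din(x)\ppr(x,v)$, which need not be small for an individual $v$; it is only after summing over all targets that the degree weights collapse into $m$ via $\sum_v\ppr(x,v)=1$. Getting the PPR recurrence on the correct (target) side, and confirming that the push step matches it term-by-term, is the other place where direction errors could creep in, so I would verify that step carefully before assembling the two bounds.
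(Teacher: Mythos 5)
Your proof is correct: the loop invariant $\ppr(s,v)=\pib(s,v)+\sum_{x}\rb(x,v)\,\ppr(s,x)$, its preservation via the target-side PPR recurrence, and the amortized push-count argument that collapses $\sum_v\sum_x \din(x)\ppr(x,v)$ to $m$ are exactly the standard derivation. The paper itself gives no proof of this lemma (it is imported verbatim from Lofgren et al.), and your reconstruction matches the argument in that cited work, so there is nothing to compare beyond noting that your version is complete and sound.
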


\vspace{-3mm}
\begin{algorithm}[h]
\begin{small}
\caption{STRAP} \label{alg:strap}
\BlankLine
\KwIn{Graph $G$, dimension $d$, error parameter $\e$, decay factor $\alpha$ }
\KwOut{Embedding vectors $\s_u$ and $\t_u$ for each $u\in V$}
Initialize sparse proximity matrix $P_{n \times n}\gets 0$\;
\For {each node $v \in V$}
{
  $Backward Push(v, \alpha,
  \e/2, G)$\;
  \For{each node $u$ with reserve $ \pib(u,v) \ge \e/2$}
  {
      $P_{uv} \gets  \pib(u,v)$\;
  }
}

\For {each node $u \in V$}
{
  $Backward Push(u, \alpha,
  \e/2, G^T)$\;
  \For{each node $v$ with reserve $ \pib^T(v,u) \ge \e/2$}
  {
     $P_{uv} \gets P_{uv}  +  \pib^T(v,u)$\;
  }
}
Set sparse matrix $P \gets \log\left({2\over \e}\cdot  P \right)$ for
non-zero entries\;
$[U, \Sigma, V] \gets \mathrm{Randomized\, SVD}(P, d)$\;
\Return $U\sqrt{\Sigma}$ and $V\sqrt{\Sigma}$ as embedding vectors\;
\end{small}
\end{algorithm}
\vspace{-3mm}

\header{\bf \strap algorithm.}  Algorithm~\ref{alg:strap} illustrates the pseudocode of \strap. Recall that our goal is to compute the proximity matrix $\P$, which
  consists of entries $\P_{uv} = \sppr(u,v) + \sppr^T(v,u)$.
  To compute $\sppr(u,v)$ for any node pair $u,v \in V$, we perform backward
push on each target node $v \in V$ with $\brmax = \e/2$ (Lines
2-3). This will give
us a list of node-reserve pairs $(u, \pib(u,v))$. For each node $u$ with reserve $ \pib(u,v) > \e/2$,  we update the
proximity matrix $\P$ by $P_{uv} \gets  \pib(u,v)$  (Lines 4-5). We
claim that $P_{uv} = \sppr(u, v)$ at this time point. We then perform the same process on
each node $u$ in $G^T$ to
compute $\sppr^T(v,u)$ (Lines 6-9). The
only difference is that for  each node $v$ with reserve
$\pib^T(v,u) > \e/2$, we increment $P_{uv}$ (instead of $P_{vu}$) by
$\pib^T(v,u)$ (Line 9). We have the
following Lemma that shows that $P$ is a sparse matrix that
approximates the transpose proximity for any node pair $(u,v)$:

\begin{lemma}
  \label{lem:proximity_matrix}
  The proximity matrix $P$ satisfies 1) There are at most $4n/\e$
  non-zero entries in $P$; 2) For any $u,v \in V$, we have
    $$\ppr(u,v) + \ppr^T(v,u) -2\e \le P_{uv} \le \ppr(u,v) +
      \ppr^T(v,u).$$
  \end{lemma}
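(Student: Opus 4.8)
The plan is to reduce everything to the guarantee of Lemma~\ref{lem:backward} applied twice with $\brmax=\e/2$, and then treat the error bound and the sparsity bound separately. After the two loops of Algorithm~\ref{alg:strap}, the stored value is $P_{uv}=\sppr(u,v)+\sppr^T(v,u)$, where $\sppr(u,v)$ denotes the quantity deposited in the first loop --- equal to the reserve $\pib(u,v)$ returned by backward push on target $v$ in $G$ when $\pib(u,v)\ge\e/2$, and $0$ otherwise --- and $\sppr^T(v,u)$ is the analogous quantity produced by the second loop on target $u$ in $G^T$. So it suffices to (i) sandwich each summand between the true proximity and the true proximity minus $\e$, and (ii) count how many entries each loop can deposit.

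For the error bound (part 2), I would first argue that $\ppr(u,v)-\e\le\sppr(u,v)\le\ppr(u,v)$. Lemma~\ref{lem:backward} with $\brmax=\e/2$ gives $\ppr(u,v)-\e/2\le\pib(u,v)\le\ppr(u,v)$ for every $u$. The upper bound is immediate: whether $\sppr(u,v)=\pib(u,v)\le\ppr(u,v)$ or $\sppr(u,v)=0\le\ppr(u,v)$. For the lower bound I split on the threshold: if $\pib(u,v)\ge\e/2$ then $\sppr(u,v)=\pib(u,v)\ge\ppr(u,v)-\e/2\ge\ppr(u,v)-\e$; if $\pib(u,v)<\e/2$ then $\ppr(u,v)\le\pib(u,v)+\e/2<\e$, so $\sppr(u,v)=0>\ppr(u,v)-\e$. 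The identical argument in $G^T$ yields $\ppr^T(v,u)-\e\le\sppr^T(v,u)\le\ppr^T(v,u)$. Adding the two two-sided inequalities gives exactly the claimed $\ppr(u,v)+\ppr^T(v,u)-2\e\le P_{uv}\le\ppr(u,v)+\ppr^T(v,u)$.

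For the sparsity bound (part 1), the key idea is to count deposited entries by source rather than by target. Fix a source $u$. A first-loop entry $P_{uv}$ is created only when $\pib(u,v)\ge\e/2$, and since $\pib(u,v)\le\ppr(u,v)$ this forces $\ppr(u,v)\ge\e/2$. Because the PPR vector from a fixed source is a probability distribution, $\sum_{v\in V}\ppr(u,v)=1$, a Markov-type argument shows at most $2/\e$ targets $v$ can satisfy $\ppr(u,v)\ge\e/2$. Summing over all $n$ sources bounds the first loop by $2n/\e$ entries. Running the same argument in $G^T$, where $\sum_{u\in V}\ppr^T(v,u)=1$, bounds the second loop by another $2n/\e$ entries. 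Since the second loop only increments existing entries or creates new ones, the number of non-zero entries in $P$ is at most the sum, namely $4n/\e$.

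I expect the only real subtlety to be this last counting step: backward push is naturally organized per target, yet the clean $O(n/\e)$ sparsity bound comes from the per-source normalization $\sum_v\ppr(u,v)=1$. The reconciliation is precisely the inequality $\pib(u,v)\le\ppr(u,v)$ from Lemma~\ref{lem:backward}, which converts a threshold on the approximate reserve into a threshold on the true PPR and lets the probability-distribution counting go through. Everything else is a direct two-sided application of that lemma.
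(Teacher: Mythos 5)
Your proposal is correct and follows essentially the same route as the paper's proof: apply Lemma~\ref{lem:backward} with $\brmax=\e/2$ once per loop, use the threshold $\pib\ge\e/2$ together with the two-sided reserve guarantee to sandwich each summand within $\e$ of the true PPR, and bound the number of retained entries per source (row for the first loop, column for the transpose loop) by $2/\e$ via the normalization $\sum_v\ppr(u,v)=1$. Your explicit case split for the lower bound and the observation that $\pib(u,v)\le\ppr(u,v)$ converts the reserve threshold into a PPR threshold are just slightly more careful phrasings of the paper's identical counting argument.
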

\begin{proof}
    As target node $v$ iterates over all possible
nodes in $V$, we ensure that for any node pair $(u,v)$, $P_{uv} =  \pib(u,v)$ if $\pib(u,v) \ge
\e/2$ and  $P_{uv} =  0$ otherwise. We will show that $P_{uv} $ is a
valid SPPR.  By the property of backward push, we
have
$$\ppr(u, v) - \e/2 = \ppr(u, v) - \brmax \le \pib(u, v)  \le  \ppr(u,
v)$$
for any $u, v \in V$. Since we only take $P_{uv} = \pib(u,v)$ with $ \pib(u,v) \ge
\e/2$, it follows that
$P_{uv}  \le \pib(u,v) \le \ppr(u,v),$
and similarly
$$P_{uv}  \ge \pib(u,v) - \e/2 \ge \ppr(u,v) - \e/2 - \e/2 =
\ppr(u,v) - \e.$$
Therefore, we have $\ppr(u,v) - \e \le P_{uv} \le
\ppr(u,v)$, and the first condition of $\sppr$ is satisfied. To see
that $\P_{uv}$ satisfies the
sparsity condition, note that we only take $\P_{uv} = \pib(u,v)$ with $ \pib(u,v) \ge
\e/2$, which means $\P_{uv}  \ge \e/2$ for any $u, v\in V$. Since the summation $\sum_{v\in V}\P_{uv} $ from a source
node $u$ satisfies
$\sum_{v\in V} P_{uv}\le \sum_{v\in V} \pib(u, v) \le  \sum_{v\in V}
\ppr(u,v) = 1,$
it follows that there are at most $2/\e$ non-zero $\P_{uv}$ entries
for a given source node $u$. Consequently, each row of $\P$ contains
no more than $2/\e$ non-zero
entries, adding to a  total of $2n/\e$
non-zero entries.  Let $P'_{uv}$ be the increment to $P_{uv}$ in line
9. By a
similar argument, we have $\ppr^T(v,u) - \e \le P'_{uv} \le
\ppr^T(v,u)$, and thus
$$\ppr(u,v) + \ppr^T(v,u) -2\e \le P_{uv} \le \ppr(u,v) + \ppr^T(v,u).$$
Finally, there are at most $2/\e$ non-zero $P'_{uv}$'s
for a given target node $v$, which means the backward pushes on $G^T$ adds at most $2n/\e$
non-zero entries, resulting at most $4n/\e$ non-zero entries in
the final proximity matrix $P$. Note that despite its sparsity,  the final proximity matrix
is not row-sparse or column-sparse, and thus is able to capture nodes
with large in- or out-degrees.
 \end{proof}
 

\header{\bf Achieving non-linearity.}
After obtaining the sparse proximity matrix $P$, we  perform 
logarithm to each non-zero entry in $P$ (Line 10). It has been shown in~\cite{zhou2017scalable}
and~\cite{qiu2018network} that skip-gram based algorithms implicitly
factorize the logarithm of certain proximity matrix,  where taking entry-wise
logarithm simulates the effect of the softmax function. We also
multiply the proximity by $2/\e$ inside the logarithm, such that all
entries of $P$ remains non-negative after we take entry-wise logarithm.

\subsection{Sparse Randomized SVD}
We perform truncated singular value
decomposition (tSVD) to decompose the proximity matrix $P$ into three matrices
$U$, $\Sigma$, and $V$, where $U$ and $V$ are $n\times d$ unitary
matrices, and $\Sigma$ is a diagonal matrix. It is folklore that the
reconstruction matrix $U\Sigma V^T$ is the best-$d$ approximation
to matrix $P$, i.e.
$$\|P -  U\Sigma V^T\|_F = \min_{rank(B)
  \le d} \|P -  B\|_F ,$$
where $\|A\|_F$ denote the Fobenius norm of matrix $A$, 
After the
decomposition, we can return $U\sqrt{\Sigma}$ and
$V\sqrt{\Sigma}$ as the content/context embedding vectors (Lines 11-12).

However, applying traditional truncated SVD to a $n \times n $ matrix requires
$O(n^2d)$ time, which is not feasible when $n$ is large. To reduce this
time complexity, we make use of the fact  that $P$ is a sparse matrix with at most $4n/\e$
non-zero entries.  In particular, we use
{\em Sparse Subspace Embedding}~\cite{clarkson2017low}, which allows us to decompose $P$ into three matrices
$U'$, $\Sigma'$ and $V'$, where $U'$ and $V'$ are $n\times d$ unitary
matrices, and $\Sigma' $ is a diagonal matrix, such that
$$ \|P -  U'\Sigma' V'^T \|_F\le (1+\delta) \|P -  U\Sigma V^T \|_F =  (1+\delta) \min_{rank(B)
  \le d} \|P -  B\|_F .$$
In other words, the reconstructed matrix $U'\Sigma' V'^T $ is an $(1+\delta)$-approximation
to the best-$d$ approximation of $P$. \cite{clarkson2017low} shows that this
decomposition can be performed in $O\left(nnz(P) +
  nd^2/\delta^4\right)$ time. Therefore, the complexity of SRSVD on
our proximity matrix $P$ is bounded by $O\left({n \over \e}+
  nd^2/\delta^4\right)$. We set $\delta$ to be a constant so the
running time is bounded by $O\left({n \over \e}+
  nd^2\right)$.  Finally, we return $U'\sqrt{\Sigma'}$ and
$V'\sqrt{\Sigma'}$ as the content/context  embedding vectors.
Note that for undirected graph, the proximity
matrix $P$ is a symmetric matrix, in which case SVD on $P$ is equivalent
to eigendecomposition on $P$. 

\header{\bf Running time and parallelism. } By
Lemma~\ref{lem:backward}, the total running time for the backward
push is $O\left(n\cdot \frac{m}{n\cdot
    \brmax}\right) = O\left(\frac{m}{
    \e}\right) $. Combining the running time $O\left({n \over \e}+
  nd^2\right)$  for randomized SVD, it follows that 
the running time of \strap is bounded by
$O\left({m \over \e} + nd^2\right)$.  We can provide tradeoffs
between scalability and accuracy by manipulating the
error parameter $\e$. In particular, as we decrease $\e$, we tradeoff
running time for more accurate
embeddings. In practice, the backward push part can be trivially
parallelized by running  backward push algorithms on multiple nodes
at the same time. To parallelize the SVD part, we use
frPCA~\cite{feng2018fast}, a parallel randomized SVD algorithm designed
for sparse matrices.  



 \begin{figure*}[!t] 
\begin{small}
 \centering
   \vspace{-5mm}
   \includegraphics[height=28mm]{./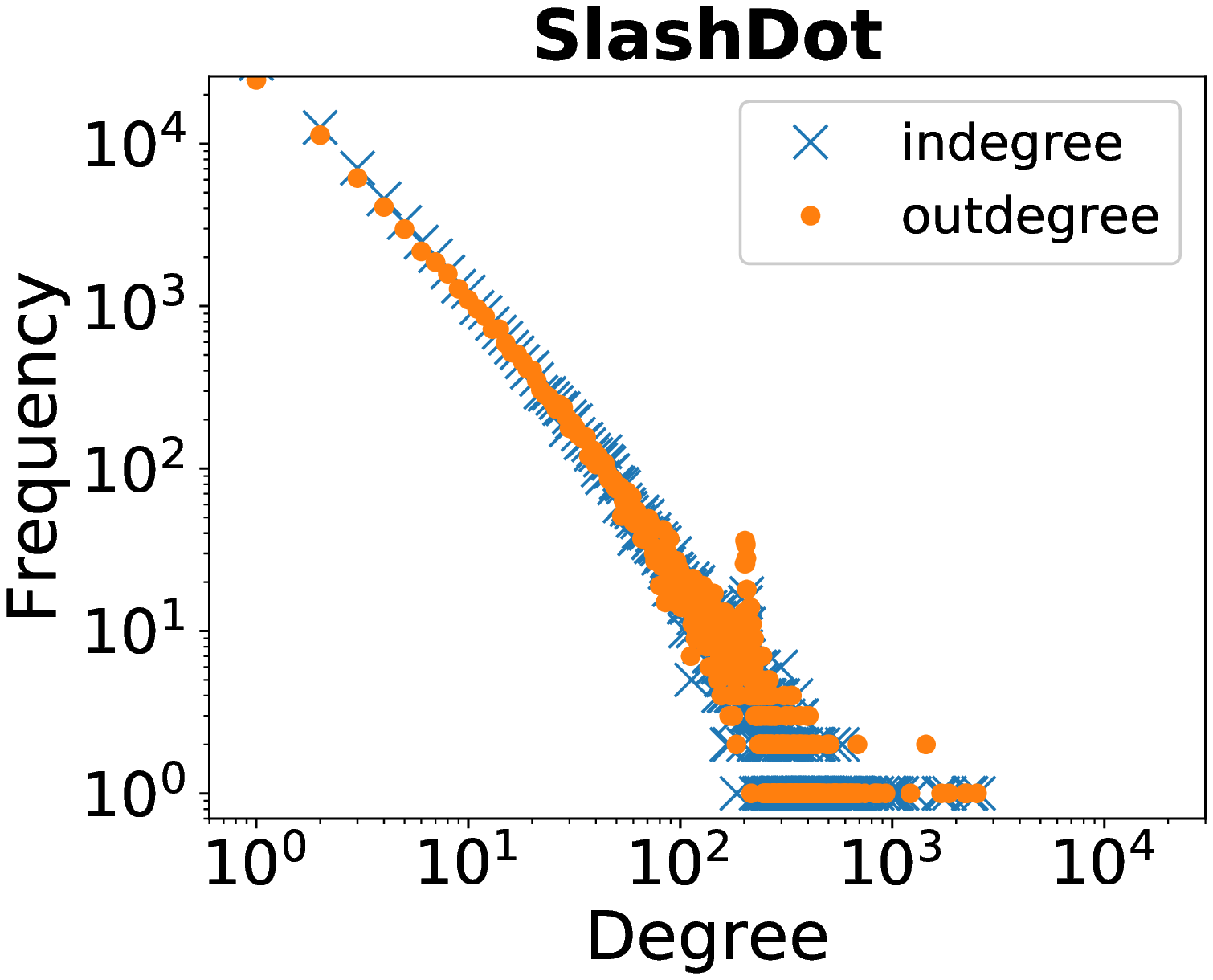}
 \includegraphics[height=28 mm]{./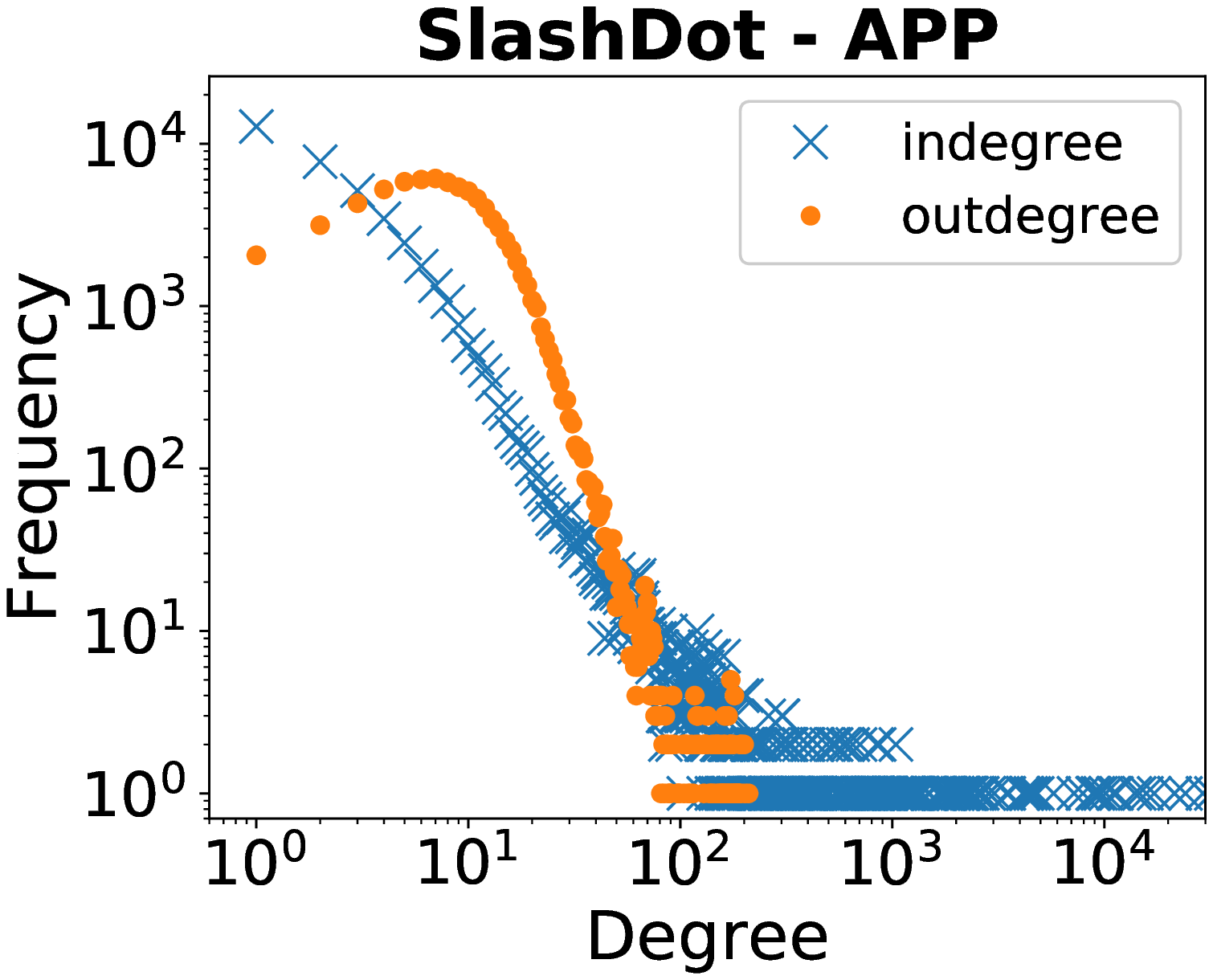}
 \includegraphics[height=28mm]{./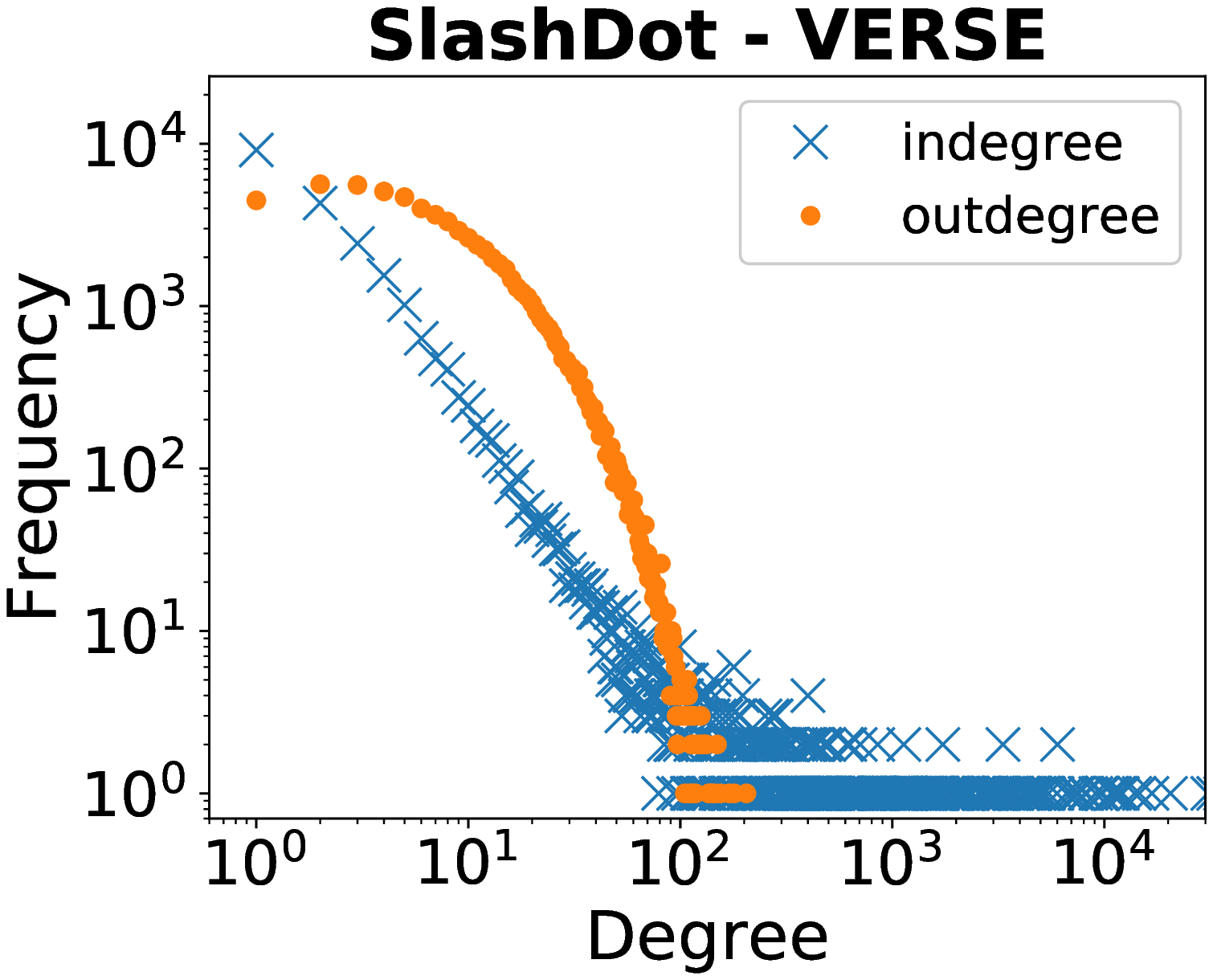}
 \includegraphics[height=28mm]{./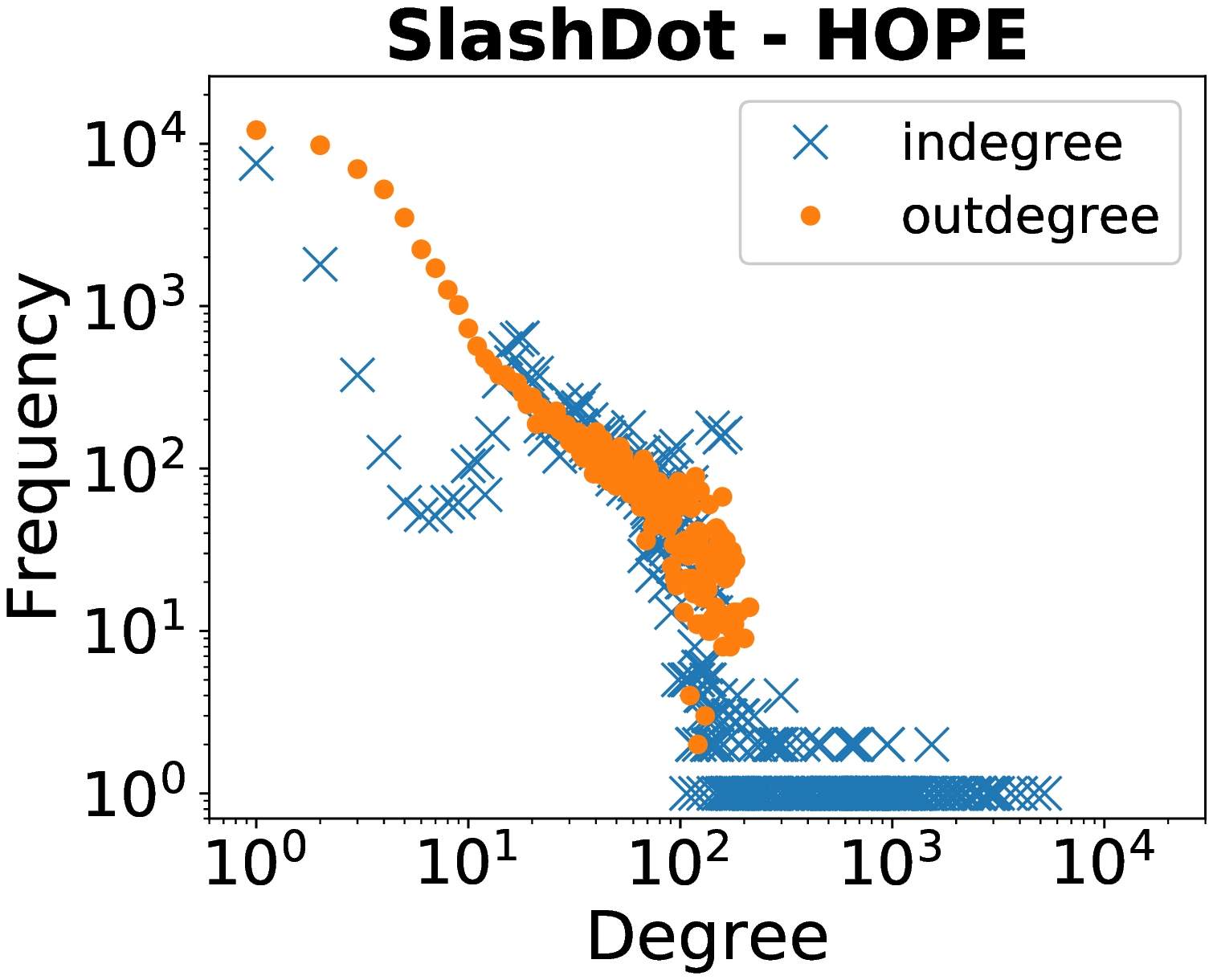}
  \includegraphics[height=28mm]{./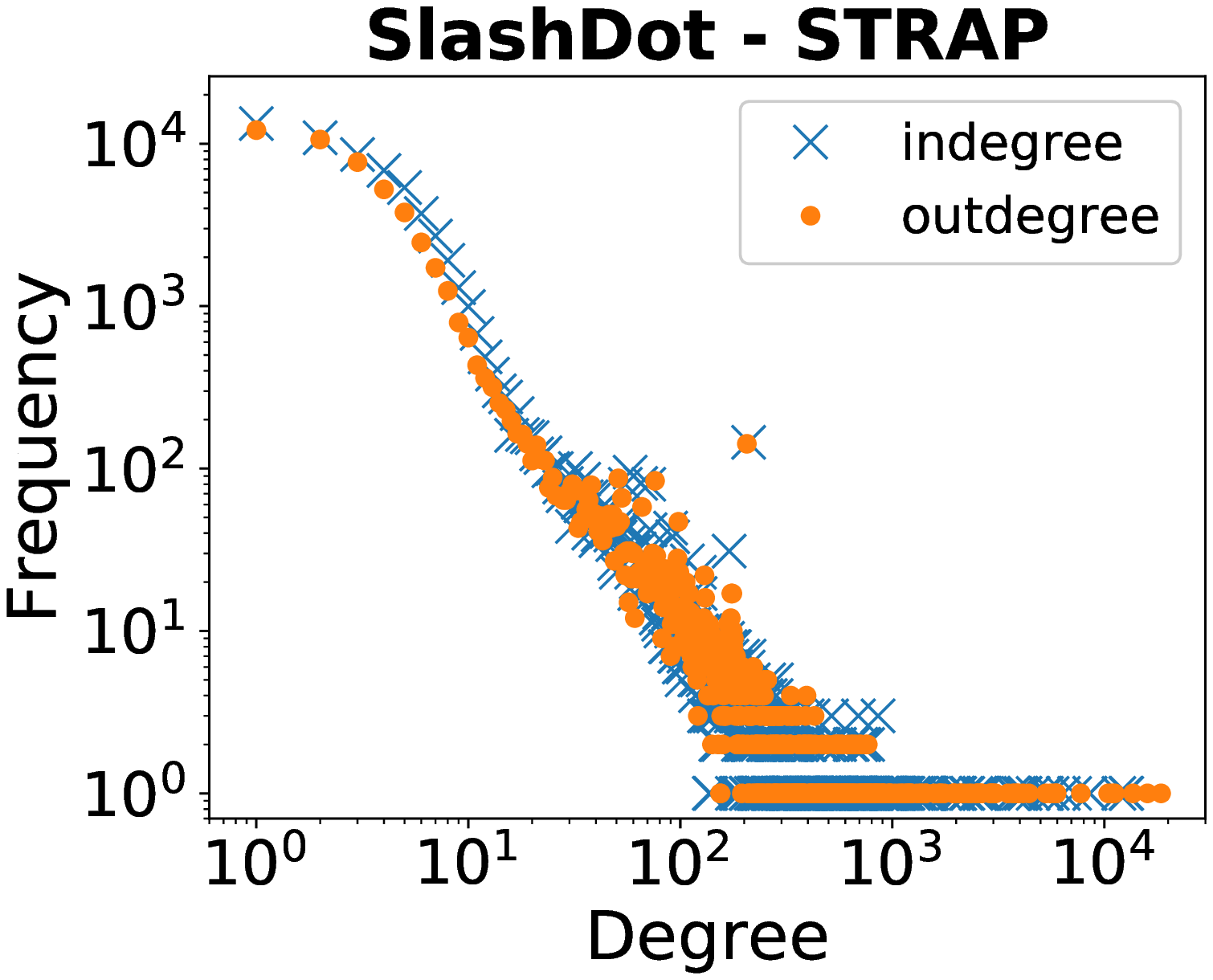}
  \vspace{-4mm}
 \caption{  Degree distributions of SlashDot.} \label{fig:degree_slashdot}
\vspace{-4mm}
\end{small}
\end{figure*}
\section{Experiments} \label{sec:exp}
This section experimentally evaluates \strap against
the states of the art. All experiments are conducted on a machine with
a Xeon(R) E7-4809@2.10GHz CPU and 320GB memory. 

\subsection{Experimental Settings} \label{sec:exp-setting}
\header
{\bf Datasets.} We employ seven widely-used datasets, as shown in
Table~\ref{tbl:datasets}. BlogCatalog, Flickr and YouTube are three undirected
  social networks where nodes represent users and edges represent
  relationships between users. 
WikiVote is the directed Wikipedia who-votes-on-whom network.
Slashdot is the directed social network of Slashdot.com. Brazil and
Euro~\cite{ribeiro2017struc2vec} are two airport
networks with nodes as airports and edges as commercial airlines. All data sets are obtained from public
sources \cite{SNAP,LWA,NRC}.
\begin{table}[!t]
\centering
\vspace{-2mm}
\caption{Data Sets.}
\vspace{-3mm}
 \begin{tabular}{|l|l|r|r|} 
 \hline
 {\bf Data Set} & {\bf Type} & {\bf $\boldsymbol{n}$} & {\bf
                                                        $\boldsymbol{m}$}	 \\ \hline
  BlogCatalog (BC) & undirected & 10,312	& 333,983 \\

   Flickr (FL) & undirected & 80,513 & 5,899,882 \\
 Youtube (YT) &	undirected	&	1,138,499	&
                                                            2,990,443
   \\
    WikiVote (WV)& directed & 7,115 & 103,689 \\

    Slashdot (SD) & directed & 82,168 & 870,161 \\
   Euro & undirected &  399 & 5,993 \\
   Brazil & undirected & 131 & 1,003 \\
 \hline
\end{tabular}
\label{tbl:datasets}
\vspace{-5mm}
\end{table}

  \header
  {\bf Competitors and Parameter Setting.} Unless otherwise specified,
  we set the embedding dimensionality $d$ to $128$ in line
  with previous research~\cite{perozzi2014deepwalk,tsitsulin2018verse,
    zhang2018arbitrary}.  For \strap,
   we set the error parameter $\e = 0.00001$, so that the running
   time of our method is comparable to that of the fastest competitor. The decay factor
   $\alpha$ is set to be $0.5$ to balance the tradeoff between transductive and
   inductive effectiveness. We evaluate \strap against several state-of-the-art
  graph embedding algorithms.  We obtain the source code of these
  methods from GitHub and present their results with the authors' preferred parameters. 
  \begin{itemize}
\item DeepWalk\footnote{https://github.com/phanein/deepwalk}~\cite{perozzi2014deepwalk}   uses truncated random walks and the skip-gram model
  to learn embedding vectors. We use the parameters
  suggested in~\cite{perozzi2014deepwalk}: window size 10, walk length
  40, and the number of walks from each node to be 80. 
\item Node2Vec\footnote{https://github.com/aditya-grover/node2vec}~\cite{grover2016node2vec}  generalizes DeepWalk by adopting potentially biased
  random walks. We set the bias parameters $p=q=1$ and use the default settings for other parameters. 
\item
  HOPE\footnote{https://github.com/ZW-ZHANG/HOPE}~\cite{ou2016asymmetric}
  uses sparse SVD to decompose the proximity matrix of form $M_g^{-1}M_\ell$. 
  The default HOPE uses Katz similarity as its proximity measure. Since Katz
  does not converge on directed graphs with sink nodes, we evaluate
  HOPE with Personalized PageRank and set decay factor $\alpha = 0.5$
  as suggested in ~\cite{ou2016asymmetric}.
  
\item
  VERSE\footnote{https://github.com/xgfs/verse}~\cite{tsitsulin2018verse}
  is a random walk method that uses Personalized PageRank 
  and SimRank as the proximities. The paper presents two
  methods, VERSE, which simulates $\alpha$-discounted random walk to train
  the skip-gram model, and  fVERSE, which directly computes
  pair-wise PPR and SimRank.  We exclude fVERSE due to its
  $\Theta(n^2)$ complexity. Following the suggestion
  in~\cite{tsitsulin2018verse}, we set the number of iterations to be
  $10^5$. We set the decay factor $\alpha$ to the default value $0.85$~\cite{tsitsulin2018verse},
  as we have experienced performance drop for $\alpha = 0.5$.  VERSE supports directed graphs by  producing asymmetric content/context embedding vectors $\s$ and $\t$~\cite{tsitsulin2018verse}.

\item
  APP\footnote{https://github.com/AnryYang/APP}~\cite{zhou2017scalable}
  uses $\alpha$-discounted random walk to train asymmetric content/context embedding 
  vectors for each node using the skip-gram model. We set the number of
  samples per node to $200$ and the decay factor $\alpha$ to  $0.15$,
  as suggested in~\cite{zhou2017scalable}. 
  
\item AROPE\footnote{https://github.com/ZW-ZHANG/AROPE}~\cite{zhang2018arbitrary} is a factorization method that preserves 
  the
  polynomial-shaped proximity matrix $P=\sum_{i=1}^q w_iA^i$. AROPE
  includes HOPE  and LINE as its special
  cases~\cite{zhang2018arbitrary} and achieves high scalability as it
  only performs eigen-decomposition to the (sparse) adjacency matrix. However, AROPE does not allow
  non-linear operations on the proximity matrix $P$, and it only works
  for undirected graphs due to the usage of eigen-decomposition. We
  use one of  the default
  set: $q =3$  and $w = \{1, 0.1, 0.01\}$. 
\end{itemize}

\header{\bf Remark.} Note that Node2Vec, VERSE, and AROPE are able to generate
multiple embedding vectors with varying parameters for each node. For example, AROPE
sets $q=1,2,3,-1$ and various weights $w$ to generate
multiple vectors for each node, and chooses the best-fit embedding vector for a specific
task using cross-validation.
Our method also allows multiple embeddings by a grid search on the decay factor $\alpha$
and error parameter $\e$. However, we argue that a fair comparison
should evaluate the embedding
vectors generated by a single set of parameters over various tasks. As
a counter-example, we note that one set of default parameters ($q=1$) in AROPE is to directly decompose
the adjacency matrix, which naturally gives the best graph
reconstruction result. However, the adjacency matrix performs poorly for inductive tasks such as link prediction or
node classification, and thus AROPE will select the embeddings
generated by a larger $q$ for these tasks. Our method can achieve similar results by
setting $\alpha$ close to $1$ for graph reconstruction and a smaller
$\alpha$ for link prediction or node classification (see Figure~\ref{fig:parameter_alpha}). However,
this would not be fair to other methods with fixed
parameters. Therefore, we believe that the only fair way to compare these
methods is to evaluate them in both
transductive and inductive tasks with consistent parameters. We also
include ADJ-SVD, the method that directly applies SVD to the adjacency
matrix, to demonstrate the above argument. For each task, we run each method ten
times and report the average of its score.

\vspace{-2mm}
\subsection{Graph Reconstruction}
We perform graph
reconstruction task to see if the low-dimensional representation can
accurately reconstruct the adjacency matrix. For
each method, we train embedding vectors and rank pairs
of nodes $(u,v)$ according to the inner product $\s_u \cdot \t_v$,
where $\s_u$ and $\t_v$ are the content and context embedding vectors of
node $u$ and $v$, respectively. We then take the top-$m$  pairs of nodes (removing
self-loop) to reconstruct the graph, where $m$ is the number of edges in the original graph.

\header{\bf Degree distributions on directed graphs.}
Figure~\ref{fig:degree_slashdot} shows the degree distribution of the
directed graph SlashDot and the reconstructed graphs by HOPE, APP,
VERSE and \strap.  We exclude Node2Vec and DeepWalk as they generate
identical in- and out-degree distributions. Similar to the results on
WikiVote, \strap is
the only method that can generate out-degree distribution 
similar to that of the original graph, which concurs with our
theoretical analysis for transpose proximity.

\header{\bf Reconstruction precision.} We calculate the ratio of
real links in top-$m$ predictions as the reconstruction
precision. Table~\ref{tbl:nr} shows the results the reconstruction
precision for 
each dataset. As expected, ADJ-SVD achieves the highest precision on all
graphs. For other methods, we observe that  \strap significantly outperforms all
existing methods. The advantage of \strap becomes more obvious on
directed graphs WikiVote and SlashDot, which demonstrates the effectiveness of transpose
proximity. 

  
\begin{table}[h]
  \begin{small}
  \centering
     \vspace{-2mm}
\caption{Graph Reconstruction Precision (\%).}
   \vspace{-3mm}
 \begin{tabular}{l >{\centering\arraybackslash}p{0.35in}
   >{\centering\arraybackslash}p{0.35in}
   >{\centering\arraybackslash}p{0.35in}
   >{\centering\arraybackslash}p{0.35in}  >{\centering\arraybackslash}p{0.35in}} 
 \hline
 {\bf Method} & BC & FL  & YT & WV	& SD \\ \hline
   DeepWalk & 5.08  & 4.86 & 0.68 & 1.64 & 3.45\\
   Node2Vec & 6.53  & 2.85 & 0.13 & 4.19 & 0.15\\
      HOPE & 21.85  & 14.90 & 8.78 & 10.98 & 8.61\\
   APP & 18.50  & 19.95 & 12.34 & 10.85 & 11.91\\
   VERSE & 40.03  & 20.22 & 6.09 & 20.89 & 10.73\\   
   AROPE & 37.06  & 26.21 & 24.50 & NA & NA\\
   \strap & \cellcolor{blue!25} 52.32  &  \cellcolor{blue!25} 34.92 &
                                                                      \cellcolor{blue!25}
                                                                      27.18
                              &  \cellcolor{blue!25} 55.29 &
                                                             \cellcolor{blue!25}  24.42\\
    \hline
   ADJ-SVD & 59.53  & 41.34 & 31.81 & 74.15 & 30.87\\
 \hline
\end{tabular}
\label{tbl:nr}
\vspace{-4mm}
\end{small}
\end{table}



\subsection{Link Prediction}
An important inductive application of graph embedding is predicting unobserved links in the graph.
To test the performance of different embedding methods on this task,
we randomly hide $50\%$ of the edges as positive samples for testing 
and sample the same number of non-existing edges as negative examples.
We then train embedding
vectors on the rest of the $50\%$ edges and predict the most likely
edges which are not observed in the training data from the learned
embedding. Table~\ref{tbl:lp} reports the precision of each
method. We observe that  \strap is consistently the best
predictor on all datasets except YouTube, on which VERSE takes the
lead by 1\%. We also note that \strap significantly outperforms the
state-of-the-art factorization methods AROPE and HOPE, and we attribute
this quality to the non-linearity of our methods.

\begin{table}[h]
    \begin{small}
  \centering
      \vspace{-2mm}
\caption{Link Prediction Precision (\%).}
    \vspace{-3mm}
 \begin{tabular}{l >{\centering\arraybackslash}p{0.35in}
      >{\centering\arraybackslash}p{0.35in}
   >{\centering\arraybackslash}p{0.35in}
   >{\centering\arraybackslash}p{0.35in}  >{\centering\arraybackslash}p{0.35in}} 
 \hline
 {\bf Method} & BC & FL  & YT & WV	& SD \\ \hline
   DeepWalk & 53.59  & 70.26 & 63.46 & 66.72 & 65.42\\
   Node2Vec & 63.58  & 57.26 & 54.55 & 56.81 & 53.37\\
   HOPE & 79.97  & 86.75 & 67.23 & 85.67 & 84.11\\
   APP & 78.19   & 81.69 & 63.11 & 61.44 & 72.77\\
   VERSE & 87.99  & 90.13 & \cellcolor{blue!25} 67.51 & 86.39 & 83.99\\   
   AROPE & 88.09  & 88.78 & 65.43 & NA & NA\\
    \strap & \cellcolor{blue!25} 88.92  & \cellcolor{blue!25} 91.49 & 66.86 & \cellcolor{blue!25} 91.79 & \cellcolor{blue!25} 84.47\\
   \hline
   ADJ-SVD & 76.36  & 89.27 & 59.31 & 74.02 & 62.77\\
 \hline
\end{tabular}
\label{tbl:lp}
\vspace{-2mm}
\end{small}
\end{table}
To demonstrate  the effect of the training ratio, we
also report the precisions of each method with varying
training/testing ratio on BlogCatalog. We observe that our method consistently outperforms
existing methods for all training ratios, with VERSE being the closest competitor.


\begin{table}[h]
    \begin{small}
  \centering
      \vspace{0mm}
\caption{Link Prediction Precision (\%) for BlogCatalog.}
    \vspace{-3mm}
 \begin{tabular}{l >{\centering\arraybackslash}p{0.35in}
   >{\centering\arraybackslash}p{0.35in}
   >{\centering\arraybackslash}p{0.35in}
   >{\centering\arraybackslash}p{0.35in}  >{\centering\arraybackslash}p{0.35in}} 
 \hline
 {\bf Method} & $10\%$ & $30\%$  & $50\%$ & $70\%$	& $90\%$ \\
   \hline
   DeepWalk & 61.77  & 54.62 & 53.59 & 53.53 & 53.41\\
   Node2Vec & 57.32  & 63.77 & 63.58 & 62.57 & 63.66\\
   HOPE & 68.24  & 72.67 & 79.97 & 81.63 & 83.45\\
   APP & 53.49  & 70.91 & 78.19 & 77.31 & 78.67\\
   VERSE & 83.73  & 86.38 & 87.99 & 88.74 & 89.52\\   
   AROPE & 80.77  & 87.37 & 88.09 & 88.35 & 88.49\\
      \strap &  \cellcolor{blue!25} 84.78  &  \cellcolor{blue!25} 87.40 &  \cellcolor{blue!25} 88.92 &  \cellcolor{blue!25} 89.92 &  \cellcolor{blue!25} 90.42\\
   \hline
   ADJ-SVD & 57.12  & 72.86 & 76.36 & 80.36 & 83.30\\
   \hline
\end{tabular}
\label{tbl:lp_blogcatalog}
\vspace{-2mm}
  \end{small}
\end{table}

\subsection{Node Classification}
Node classification aims to predict the correct node labels in a graph. 
Following the same experimental procedure in~\cite{perozzi2014deepwalk}, we
randomly sample a portion of labeled vertices for training and use the
rest for testing.  The
training ratio is varied from 10\% to 90\%.
We use LIBLINEAR~\cite{fan2008liblinear} to perform logistic regression with default parameter settings.
To avoid the
thresholding effect~\cite{tang2009large}, we assume
that the number of labels for test
data is given~\cite{perozzi2014deepwalk}. The performance of each
method is evaluated in terms of average Micro-F1 and
average Macro-F1~\cite{tsoumakas2009mining}, and we only report Micro-F1 as we experience similar behaviors with
Macro-F1. Table~\ref{tbl:nc_blog} and Table~\ref{tbl:nc_flickr} show the
node classification results on BlogCatalog and Flickr. Surprisingly, DeepWalk
outperforms all successors other than \strap on both datasets. On the
other hand, \strap is able to achieve comparable precision to that of
DeepWalk. In particular, \strap significantly outperforms HOPE and
AROPE, which again demonstrates the effectiveness of the non-linearity. 

\begin{table}[h]
    \begin{small}
  \centering
      \vspace{-2mm}
\caption{Node Classification on BlogCatalog.}
    \vspace{-3mm}
 \begin{tabular}{l >{\centering\arraybackslash}p{0.35in}
   >{\centering\arraybackslash}p{0.35in}
   >{\centering\arraybackslash}p{0.35in}
   >{\centering\arraybackslash}p{0.35in}  >{\centering\arraybackslash}p{0.35in}} 
 \hline
 {\bf Method} & $10\%$ & $30\%$  & $50\%$ & $70\%$	& $90\%$  \\ \hline
   DeepWalk & 35.93  & 39.65 & 40.86 & 41.93 &  \cellcolor{blue!25} 43.31\\
   Node2Vec & 34.60  & 38.27 & 39.31 & 40.14 & 40.36\\
   HOPE & 16.68  & 17.85 & 17.92 & 19.23 & 20.18\\
   APP & 28.09  & 31.63 & 33.31 & 33.71 & 33.49\\
   VERSE & 31.48  & 35.96 & 38.32 & 39.64 & 40.49\\   
   AROPE & 27.01  & 30.98 & 31.89 & 32.76 & 32.94\\
     \strap &  \cellcolor{blue!25} 36.42  &  \cellcolor{blue!25} 40.29 &  \cellcolor{blue!25} 41.59 &  \cellcolor{blue!25} 42.68 & 42.55\\
   \hline
      ADJ-SVD & 23.15  & 28.42 & 29.75 & 31.83 & 31.85\\
\hline
\end{tabular}
\label{tbl:nc_blog}
\vspace{-2mm}
\end{small}
\end{table}

\begin{table}[h]
    \begin{small}
  \centering
      \vspace{-2mm}
\caption{Node Classification on Flickr.}
    \vspace{-3mm}
 \begin{tabular}{l >{\centering\arraybackslash}p{0.35in}
   >{\centering\arraybackslash}p{0.35in}
   >{\centering\arraybackslash}p{0.35in}
   >{\centering\arraybackslash}p{0.35in}  >{\centering\arraybackslash}p{0.35in}} 
 \hline
 {\bf Method} & $10\%$ & $30\%$  & $50\%$ & $70\%$	& $90\%$  \\ \hline
   DeepWalk & 38.96  & 40.83 &\cellcolor{blue!25}  41.54 & \cellcolor{blue!25} 41.85 & \cellcolor{blue!25} 42.08\\
   Node2Vec & 38.15  & 39.85 & 40.60 & 41.06 & 41.34\\
   HOPE & 16.39  & 16.59 & 16.59 & 16.67 & 16.56\\
   APP & 33.15  & 35.29 & 35.99 & 36.23 & 36.54\\
   VERSE & 34.54  & 37.10 & 38.07 & 38.57 & 38.83\\   
    AROPE & 29.56  & 30.62 & 30.89 & 31.27 & 31.73\\
 \strap & \cellcolor{blue!25} 39.32  & \cellcolor{blue!25} 41.00 & 41.47 & 41.77 & 42.06\\

   \hline
         ADJ-SVD & 24.52  & 26.59 & 27.22 & 27.54 & 27.97\\
 \hline

\end{tabular}
\label{tbl:nc_flickr}
\vspace{-2mm}
\end{small}
\end{table}


\header{\bf Node Structural Role Classification.}
We also perform node structural role classification
task~\cite{zhang2018arbitrary,ribeiro2017struc2vec} on Brazil and Euro, two airport
networks with nodes as airports and edges as commercial airlines. The goal is to assign each node
a label from 1 to 4 to indicate  the level of activities of the
corresponding airports. Due to the size of the graphs, we set the
dimension $d = 16$ for this particular task. Table~\ref{tbl:nc_brazil}
and Table~\ref{tbl:nc_eu}
shows the node structural role classification results on the two
graphs, respectively. Again, our method performs comparably well. This suggests that \strap preserves the structural
role of the graphs.  We also observe that DeepWalk, the main competitor
in the previous task,  achieves unsatisfying results, while
our method performs consistently on two very different tasks.

\begin{table}[h]
    \begin{small}
  \centering
      \vspace{-1mm}
\caption{Node Structural Role Classification on  Brazil.}
    \vspace{-3mm}
 \begin{tabular}{l >{\centering\arraybackslash}p{0.35in}
   >{\centering\arraybackslash}p{0.35in}
   >{\centering\arraybackslash}p{0.35in}
   >{\centering\arraybackslash}p{0.35in}  >{\centering\arraybackslash}p{0.35in}} 
 \hline
 {\bf Method} & $10\%$ & $30\%$  & $50\%$ & $70\%$	& $90\%$\\ \hline 
   DeepWalk & 25.42  & 32.61 & 27.27 & 25.00 & 21.43\\
   Node2Vec & 36.44  & 41.30 & 42.42 & 37.50 & 50.00\\
   HOPE & 22.88  & 20.65 & 21.21 & 32.50 & 28.57\\
   APP & 24.58  & 35.87 & 36.36 & 40.00 & 28.57\\
   VERSE & 30.51  & 32.61 & 31.82 & 42.50 & 35.71\\   
   AROPE & \cellcolor{blue!25} 39.83  & 47.83 & 50.00 & \cellcolor{blue!25} 60.00 & 64.29\\
       \strap & 37.29  &\cellcolor{blue!25} 51.74 & \cellcolor{blue!25} 52.42 & 59.50 & \cellcolor{blue!25} 70.71\\
   \hline
     ADJ-SVD & 38.98  & 43.48 & 46.97 & 62.50 & 64.29\\
 \hline

\end{tabular}
\label{tbl:nc_brazil}
\vspace{-1mm}
\end{small}
\end{table}

\begin{table}[h]
    \begin{small}
  \centering
      \vspace{0mm}
\caption{Node Structural Role Classification on  Euro.}
    \vspace{-3mm}
 \begin{tabular}{l >{\centering\arraybackslash}p{0.35in}
   >{\centering\arraybackslash}p{0.35in}
   >{\centering\arraybackslash}p{0.35in}
   >{\centering\arraybackslash}p{0.35in}  >{\centering\arraybackslash}p{0.35in}} 
 \hline
 {\bf Method} & $10\%$ & $30\%$  & $50\%$ & $70\%$	& $90\%$\\ \hline

   DeepWalk & 26.94  & 26.79 & 24.00 & 30.00 & 35.00\\
   Node2Vec & 37.78  & 40.00 & 39.00 & 40.83 & 50.00\\
   HOPE & 25.00  & 27.50 & 20.50 & 23.33 & 30.00\\
   APP & 26.11  & 32.14 & 28.50 & 38.33 & 42.50\\
   VERSE & 33.89  & 39.29 & 43.50 & 45.83 & 42.50\\   
   AROPE & 42.50  & 41.43 & 41.50 & 60.83 & 65.00\\
      \strap & \cellcolor{blue!25} 47.56  & \cellcolor{blue!25} 44.79 & \cellcolor{blue!25} 48.75 & \cellcolor{blue!25} 61.42 & \cellcolor{blue!25} 65.50\\
   \hline
     ADJ-SVD & 42.78  & 43.57 & 43.50 & 53.33 & 65.00\\
 \hline

\end{tabular}
\label{tbl:nc_eu}
\vspace{-1mm}
  \end{small}
\end{table}

\subsection{Running Time and Scalability}
Table~\ref{tbl:time} reports the wall-clock time of each method, with
thread number bounded by 24. 
In general, our method achieves the same level of scalability as AROPE
does, and significantly outperforms all random walk methods. 
\begin{table}[h]
    \begin{small}
  \centering
    \vspace{-3mm}
    \caption{Running time (s).}
    \vspace{-3mm}
\begin{tabular}{l >{\centering\arraybackslash}p{0.35in}
   >{\centering\arraybackslash}p{0.35in}
   >{\centering\arraybackslash}p{0.35in}
   >{\centering\arraybackslash}p{0.35in}  >{\centering\arraybackslash}p{0.35in}} 
 \hline
 {\bf Method} & BC & FL  & YT & WV	& SD \\ \hline
   DeepWalk & 1.2e3  & 1.3e4 & 1.7e5 & 7.3e2 & 1.2e4\\
  Node2Vec & 2.8e2  & 6.4e4 & 3.4e4 & 1.1e2 & 6.2e3\\
  HOPE & 3.5e2  & 2.5e3 & 1.9e5 & 2.3e2 & 2.5e3\\
   APP & 8.9e2  & 7.2e3 & 1.7e5 & 6.2e2 & 9.3e3\\
   VERSE & 2.7e2  & 2.4e3 & 3.6e4 & 1.1e2 & 1.7e3\\   
  AROPE & \cellcolor{blue!25} 2.4e1  & \cellcolor{blue!25} 1.3e2 & \cellcolor{blue!25} 1.0e3 & NA & NA\\
  \strap & 3.9e1 & 7.5e2 & 2.1e3 &\cellcolor{blue!25} 6.0e0 & \cellcolor{blue!25} 2.4e2\\

 \hline
\end{tabular}
\label{tbl:time}
\vspace{-2mm}
\end{small}
\end{table}


\subsection{Parameter Analysis}

We study the effect of decay factor $\alpha$ and error parameter
$\e$. Figure~\ref{fig:parameter_alpha} shows how graph reconstruction
and link prediction precisions behave as we vary $\alpha$ from 1 to
0. The results show that  $\alpha$ provides tradeoffs
between inductive and transductive effectiveness: for $\alpha$ close to
$1$, the proximity matrix focuses on one-hop neighbors and thus
preserves the adjacency information.
As 
$\alpha$ approaches $0$, the information of multi-hop neighbors will be added
to the proximity matrix, trading transductive strength for  inductive
strength. Figure~\ref{fig:parameter_eps} shows how running time and  graph reconstruction
precisions behave as we vary error parameter $\e$.
It shows that $\e$ controls the tradeoff between precision
and running time. As we decrease $\e$, we tradeoff
running time for more accurate
embedding vectors.
 \begin{figure}[h]
\begin{small}
 \centering
   \vspace{0mm}
 \includegraphics[height=31mm]{./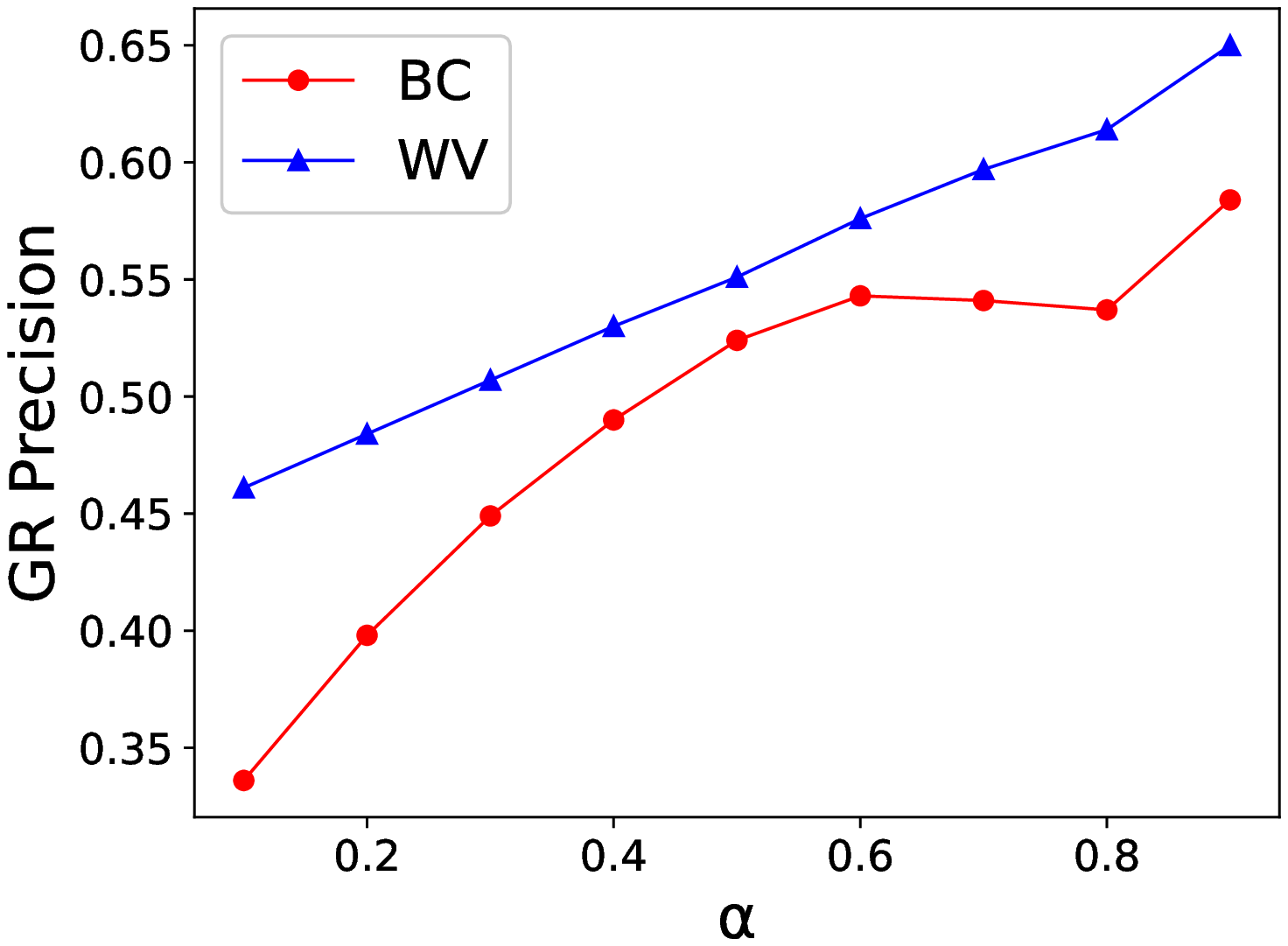}
  \includegraphics[height=31mm]{./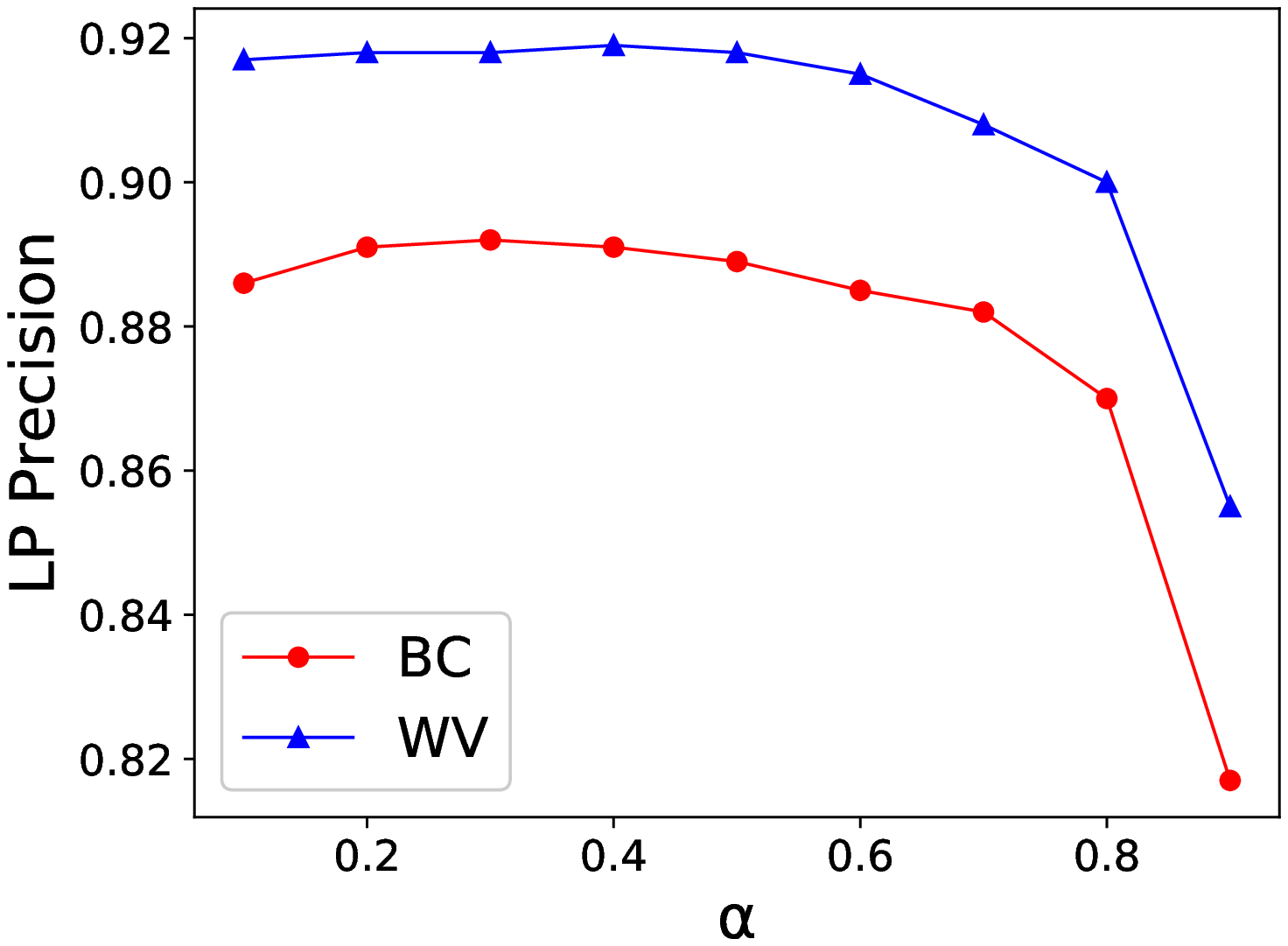}
  \vspace{-4mm}
 \caption{Graph reconstruction and link prediction precisions with varying $\alpha$.} \label{fig:parameter_alpha}
\vspace{-2mm}
\end{small}
\end{figure}

 \begin{figure}[h]
\begin{small}
 \centering
   \vspace{-2mm}
  \includegraphics[height=31mm]{./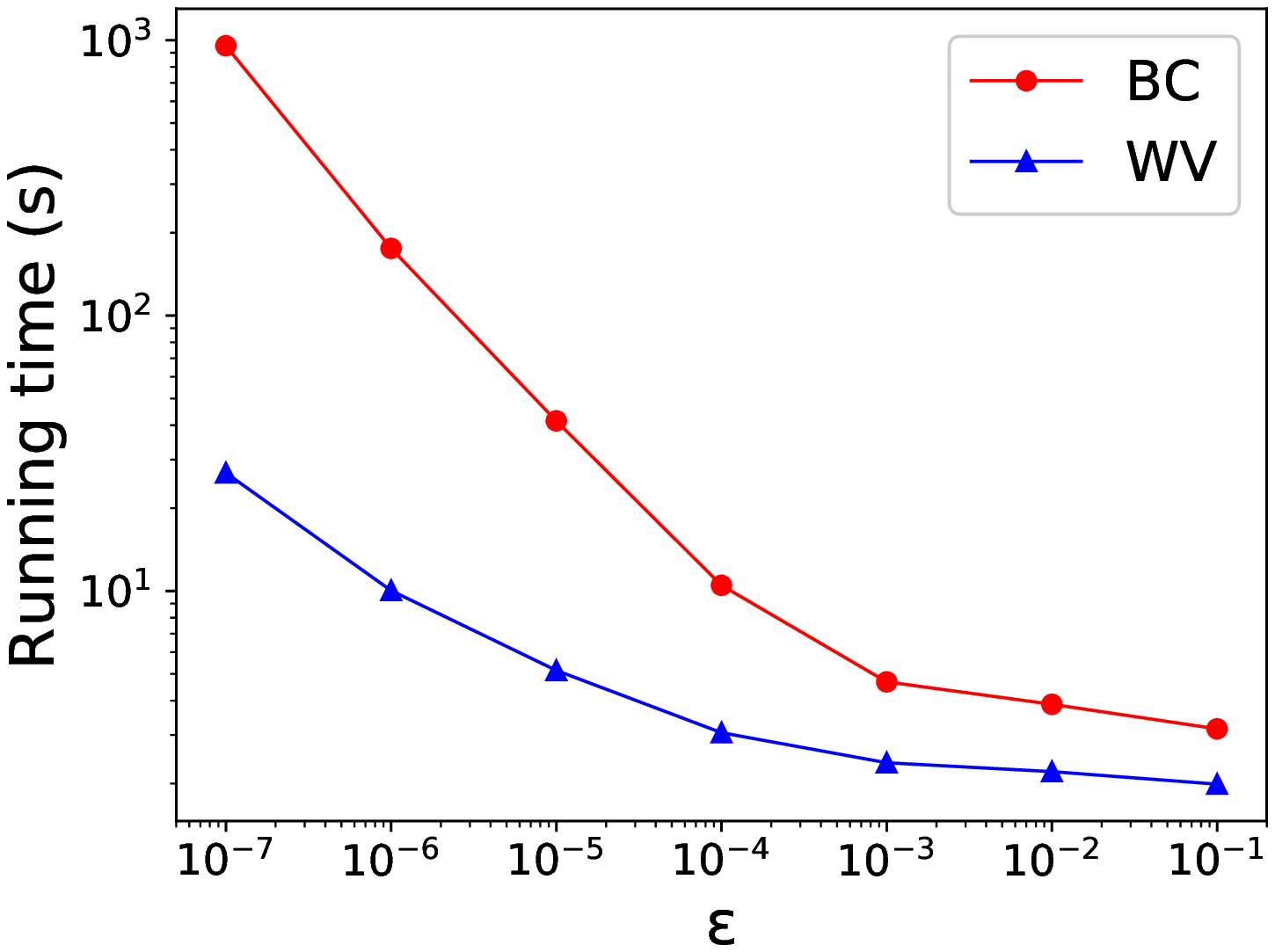}
 \includegraphics[height=31mm]{./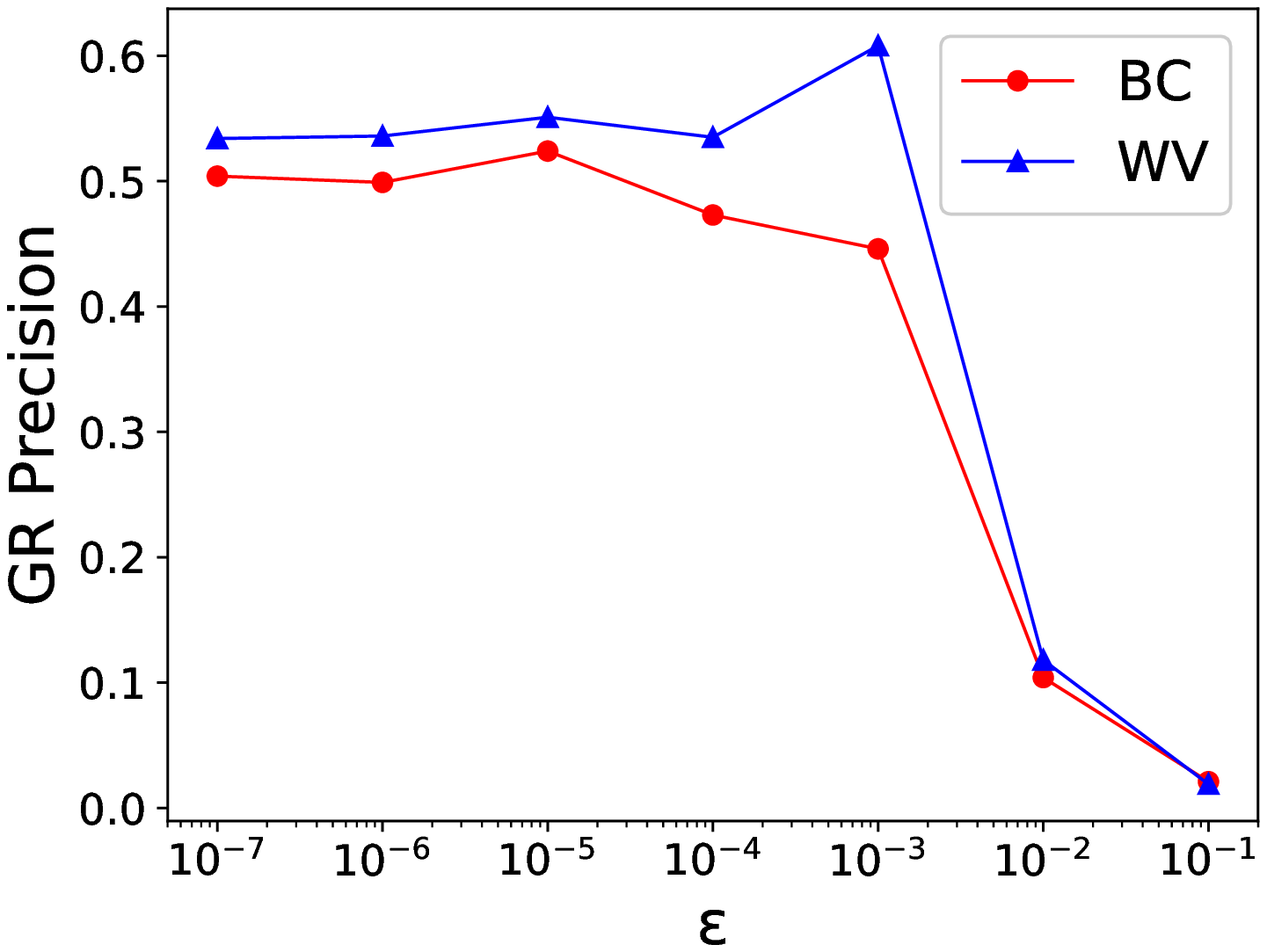}
  \vspace{-4mm}
 \caption{Running time and graph reconstruction precisions with varying $\e$.} \label{fig:parameter_eps}
\vspace{-3mm}
\end{small}
\end{figure}

\vspace{-2mm}
\section{Conclusion}
In this paper, we propose transpose proximity, a unified approach that
allows graph embeddings to preserve both in- and out-degree
distributions on directed graphs and to avoid the conflicting
optimization goals on undirected
graphs. Based on the concept of transpose proximity, we present \strap,
a factorization method that achieves both
scalability and non-linearity on large graphs. The theoretical analysis shows that
the running time of our algorithm is linear to the number of edges in
the graph. The experimental results show by using transpose proximity,
\strap outperforms competitors in both transductive and inductive
tasks, while achieving satisfying scalability. 
As future work, an interesting
open problem is to study how to combine  transpose proximity with the skip-gram
model for better parallelism and predictive strength.

\vspace{-2mm}
\section{ACKNOWLEDGEMENTS}
This research was supported in part by National Natural Science Foundation of China
(No. 61832017 and No. 61732014) and by the Fundamental Research Funds for the Central Universities and the Research Funds of Renmin University of China under Grant 18XNLG21.



\vspace{-2mm}
\begin{small}
\bibliographystyle{plain}
\bibliography{paper}
\end{small}


\end{document}